\definecolor{baselinecolor}{gray}{.8}
\newcommand{\baseline}[1]{\cellcolor{baselinecolor}{#1}}
\newlength\savewidth\newcommand\shline{\noalign{\global\savewidth\arrayrulewidth
  \global\arrayrulewidth 1pt}\hline\noalign{\global\arrayrulewidth\savewidth}}
\newcommand{\tablestyle}[2]{\setlength{\tabcolsep}{#1}\renewcommand{\arraystretch}{#2}\centering\footnotesize}
\newcolumntype{x}[1]{>{\centering\arraybackslash}p{#1pt}}
\newcolumntype{y}[1]{>{\raggedright\arraybackslash}p{#1pt}}
\newcolumntype{z}[1]{>{\raggedleft\arraybackslash}p{#1pt}}
\newtheorem{definition}{Definition}
\newtheorem{proposition}{Proposition}
\newtheorem{assumption}{Assumption}
\newtheorem{properties}{Properties}
\newcolumntype{"}{!{\vrule width 1pt}}
\title{MEATRD: Multimodal Anomalous Tissue Region Detection\\Enhanced with Spatial Transcriptomics}
\author {
    Kaichen Xu\textsuperscript{\rm 1}\equalcontrib,
    Qilong Wu \textsuperscript{\rm 1}\equalcontrib,
    Yan Lu \textsuperscript{\rm 1},
    Yinan Zheng \textsuperscript{\rm 1},
    Wenlin Li \textsuperscript{\rm 1},
    Xingjie Tang \textsuperscript{\rm 1}, 
    Jun Wang \textsuperscript{\rm 2}, 
    Xiaobo Sun \textsuperscript{\rm 1}\thanks{Corresponding author.}
}
\begin{document}

\maketitle

\begin{abstract}
The detection of anomalous tissue regions (ATRs) within affected tissues is crucial in clinical diagnosis and pathological studies. Conventional automated ATR detection methods, primarily based on histology images alone, falter in cases where ATRs and normal tissues have subtle visual differences. The recent spatial transcriptomics (ST) technology profiles gene expressions across tissue regions, offering a molecular perspective for detecting ATRs. However, there is a dearth of ATR detection methods that effectively harness complementary information from both histology images and ST. To address this gap, we propose MEATRD, a novel ATR detection method that integrates histology image and ST data. MEATRD is trained to reconstruct image patches and gene expression profiles of normal tissue spots (inliers) from their multimodal embeddings, followed by learning a one-class classification AD model based on latent multimodal reconstruction errors. This strategy harmonizes the strengths of reconstruction-based and one-class classification approaches. At the heart of MEATRD is an innovative masked graph dual-attention transformer (MGDAT) network, which not only facilitates cross-modality and cross-node information sharing but also addresses the model over-generalization issue commonly seen in reconstruction-based AD methods. Additionally, we demonstrate that modality-specific, task-relevant information is collated and condensed in multimodal bottleneck encoding generated in MGDAT, marking the first theoretical analysis of the informational properties of multimodal bottleneck encoding. Extensive evaluations across eight real ST datasets reveal MEATRD's superior performance in ATR detection, surpassing various state-of-the-art AD methods. Remarkably, MEATRD also proves adept at discerning ATRs that only show slight visual deviations from normal tissues. Our code is available at \url{https://github.com/wqlzuel/MEATRD}.
\end{abstract}

\section{Introduction}
Detecting anomalous tissue regions (ATR) within tissues from affected individuals is essential in clinical diagnostics, pathological studies, and targeted therapies ~\cite{srinidhi2021deep}. Traditionally, automated ATR detection, which typically applies computer vision techniques to histology images, e.g., whole-slide images (WSI) stained with hematoxylin and eosin (H\&E) \cite{zingman2023learning}, is a specialized task of image anomaly detection (AD). However, histology images, unlike natural images (e.g., those in ImageNet dataset)  \cite{Bergmann_2019_CVPR}, present unique challenges for AD due to their inherent high complexity \cite{zehnder2022multiscale}, subtle differences between ATRs and normal tissues \cite{shenkar2021anomaly}, the diverse manifestations of ATRs \cite{komura2018machine},  and variability in staining quality \cite{zingman2023learning}. The complexities demand supplementary information to visual cues for accurate ATR detection. 

Spatial transcriptomics (ST) meets this need by providing spatial gene expression data. By far, a total of 1033 publicly available human ST datasets that span 56 diseases and 35 tissues, providing a rich resource for investigating ATRs at the molecular level \cite{wang2024crost}. A typical ST dataset is structured as a matrix $\mathbf{X}\in \mathbb{R}^{N\times G}$, where $\mathbf{X}_{i,j}$ denotes the expression read counts of the $j$-th gene mapped to $i$-th tissue spot. As illustrated in \Cref{fig1}, these spots, ranging in size from 10 to 200 $\mu$m as per sequencing technology, are spatially arranged in arrays to cover the entire tissue slice ~\cite{hu2023deciphering}, thereby characterizing gene expression profile across the tissue. This molecular-level data, especially in cases where ATRs are visually similar to normal tissues, can significantly aid in their detection ~\cite{hu2021spagcn}. However, due to limitations inherent to sequencing technology, ST data suffer from severe noise and substantial missing values  in gene expression measurements \cite{wang2022sprod}, leading to compromised precision in demarcating tissue regions \cite{wang2022region}. Integration of histology images with ST data presents a promising solution to these challenges. As illustrated in our toy example in \Cref{fig1}, the blank spots in the ST dataset's spatial map, which represent tumor core locations with missing gene expression data, are visually identifiable in the accompanying histology image. Conversely, the tumor edge region, which may not be easily distinguishable from normal tissues visually, is detectable in the ST data. Therefore, the information from the two modalities can complement each other, greatly enhancing the precision of ATR detection. Fortunately, ST technologies like 10x Visium \cite{Moses2022st} provide accompanying histology images, allowing concurrent analysis of visual and genetic information for ATR detection.

\begin{figure}[t]
  \centering
  \includegraphics[width=\linewidth]{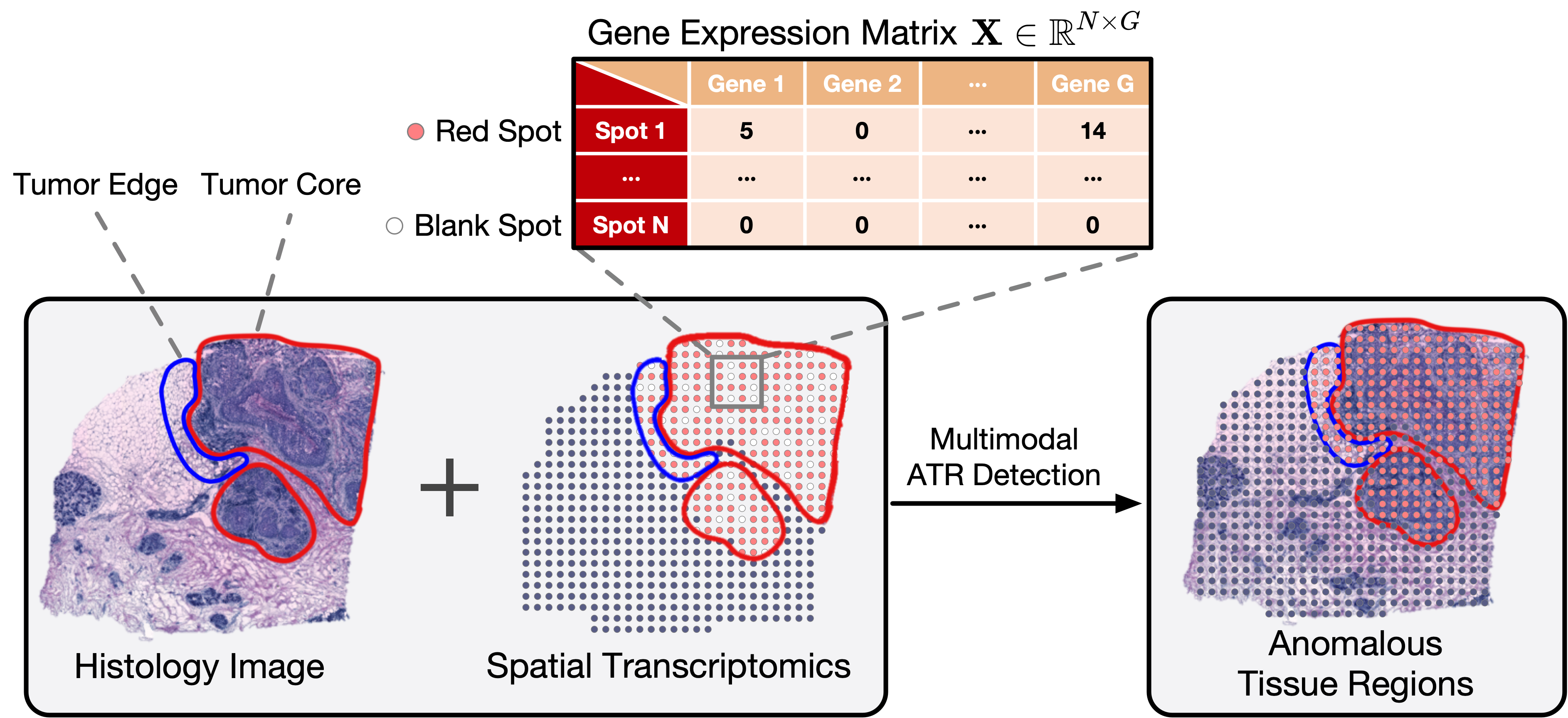}
  \caption{Detecting ATRs with histology images and ST data. ATRs include both tumor core and edge regions, as delineated by red and blue outlines in the histology image, respectively. The tumor edge region visually resembles the adjacent normal tissues. In the spatial map of the ST dataset, the ATRs encompass both red and blank spots, with blank spots indicating locations of missing gene expression data.}
  \label{fig1}
\end{figure}

Given the rarity and unpredictable heterogeneity of anomalies, AD in images is often framed as an unsupervised learning problem, where anomalies are not known a priori. Models are trained exclusively on reference datasets comprising inliers to understand "normality" at training time and identify deviations from this norm as anomalies at inference time \cite{liu2023simplenet,Bergmann_2019_CVPR}. Contemporary image AD methods, which use deep learning to learn initial representations of normal images \cite{shvetsova2021anomaly,liu2023simplenet}, often involve an encoder pre-trained on large natural image datasets \cite{deng2022anomaly,roth2022towards}. These representations are then used to either model the inlier distribution in latent space, as seen in one-class classification methods \cite{DeepSVDD18}, or to reconstruct inliers in reconstruction-based methods \cite{schlegl2019f}. Instances in the target dataset, which exhibit low probability in the inlier distribution or larger-than-expected reconstruction errors are deemed anomalous. 

Despite successes of these methods in areas such as manufacturing defect inspection, financial fraud detection, etc \cite{sohn2020learning}, the unique challenges posed by ATR detection require more specialized methods \cite{riasatian2021fine,tschuchnig2022anomaly}. To meet this need, adaptions made to image AD methods focus on representation learning and anomaly discrimination techniques. For example, image encoders pre-trained on natural images are replaced with those tailored for histology images, such as U-Net \cite{zehnder2022multiscale}, DenseNet \cite{riasatian2021fine}, and s2-AnoGAN \cite{pocevivciute2021unsupervised}. In addition, anomaly scoring is adapted to use perceptual loss instead of pixel-wise reconstruction errors commonly used for natural images \cite{shvetsova2021anomaly,zehnder2022multiscale}. However, these methods may struggle when ATRs visually resemble normal tissues \cite{bejnordi2017diagnostic}. In contrast, ST differentiates tissue regions at the gene expression level \cite{hu2021spagcn,dong2022deciphering}, providing a remedy for ATR detection involving such complexities. Currently, Spatial-ID \cite{shen2022spatial} is the only method that uses ST for ATR detection, employing a DNN classifier which assigns spots in the ST dataset to known regions while determining those with uncertain assignments as anomalies. However, this classification-based approach can induce high false positive rates, as uncertainties in assignment could stem from similarities among normal tissues rather than the presence of ATRs \cite{CAMLU}. Its sole reliance on ST data also makes it vulnerable to noise and dropouts in gene expression measurements, even for detecting visually recognizable ATRs. 

In this study, we propose \textbf{M}ultimodality \textbf{E}nhanced \textbf{A}nomalous \textbf{T}issue \textbf{R}egion \textbf{D}etection (\textbf{MEATRD}), the first method that integrates histology images and ST data for enhanced ATR detection. MEATRD conceptualizes tissue spots as nodes within an attributed graph, leveraging a reconstruction-based graph model for inlier nodes reconstruction from dual perspectives of image and gene expression. During inference, the discrepancies between reconstruction errors of inliers (i.e., normal tissues) and anomalies (i.e., ATRs) can be exploited by a discriminative model for accurate ATR detection. As shown in Figure 2, MEATRD involves three stages. \textbf{Stage I} focuses on extracting visual features of histology images. The histology image is segmented into a patch centered around each spot, which are processed into imagery embeddings. \textbf{Stage II} aims to reconstruct the gene expression profiles and image patches of each spot from their fused embeddings, obtained using our innovative masked graph dual-attention transformer (MGDAT) network.  MGDAT allows concurrent cross-node and cross-modal attention calculations, promoting efficient cross-modality information sharing and incorporation of spatial relationships among spots. Additionally, to counter potential model over-generalization\footnote{A common pitfall of reconstruction-based methods where anomalies might yield low reconstruction errors \cite{liu2023simplenet,ristea2022self}.}, we employ the node-feature masking strategy, which forces the model to rely more on the surrounding context and cross-modal information. \textbf{Stage III} focuses on acquiring a one-class classification model to identify anomalies. Unlike existing one-class classification AD methods that use instance deep embeddings and are prone to reference-target domain shifts \cite{ouardini2019towards}, our model pioneers in using domain shift-robust latent multimodal reconstruction losses \cite{donahue2016adversarial,schlegl2019f} for more reliable anomaly detection. By collapsing inliers' reconstruction losses into a compact hypersphere, our model increases the reconstruction error discrepancy between inliers and anomalies, thereby further mitigating model over-generalization. In summary, our main contributions include:
\begin{itemize}[left=0pt] \setlength{\itemsep}{0.01pt}
    \item We propose MEATRD, a pioneer multimodal method that integrates spatial transcriptomics with histology images for enhanced ATR detection.  
    \item MEATRD simultaneously addresses the over-generalization in reconstruction-based AD methods and the domain shift issue in one-class classification, leading to significant performance improvement.
    \item We design an MGDAT network as the core component of MEATRD to facilitate cross-modality and cross-node information exchange while ameliorating model over-generalization. We also demonstrate the theoretical foundation for this information exchange, which is grounded in MGDAT's ability to generate inclusive and condensed encoding of modality-specific, task-relevant information (supplementary material D). 
    \item Extensive benchmarks on eight breast cancer ST datasets demonstrate MEATRD's superiority over nine state-of-the-art (SOTA) AD methods in accurately detecting ATRs, including those with subtle visual deviations from surrounding normal tissues. 
\end{itemize}

\section{Preliminary}
\begin{figure*}[t]
  \centering
  \includegraphics[width=0.8\linewidth]{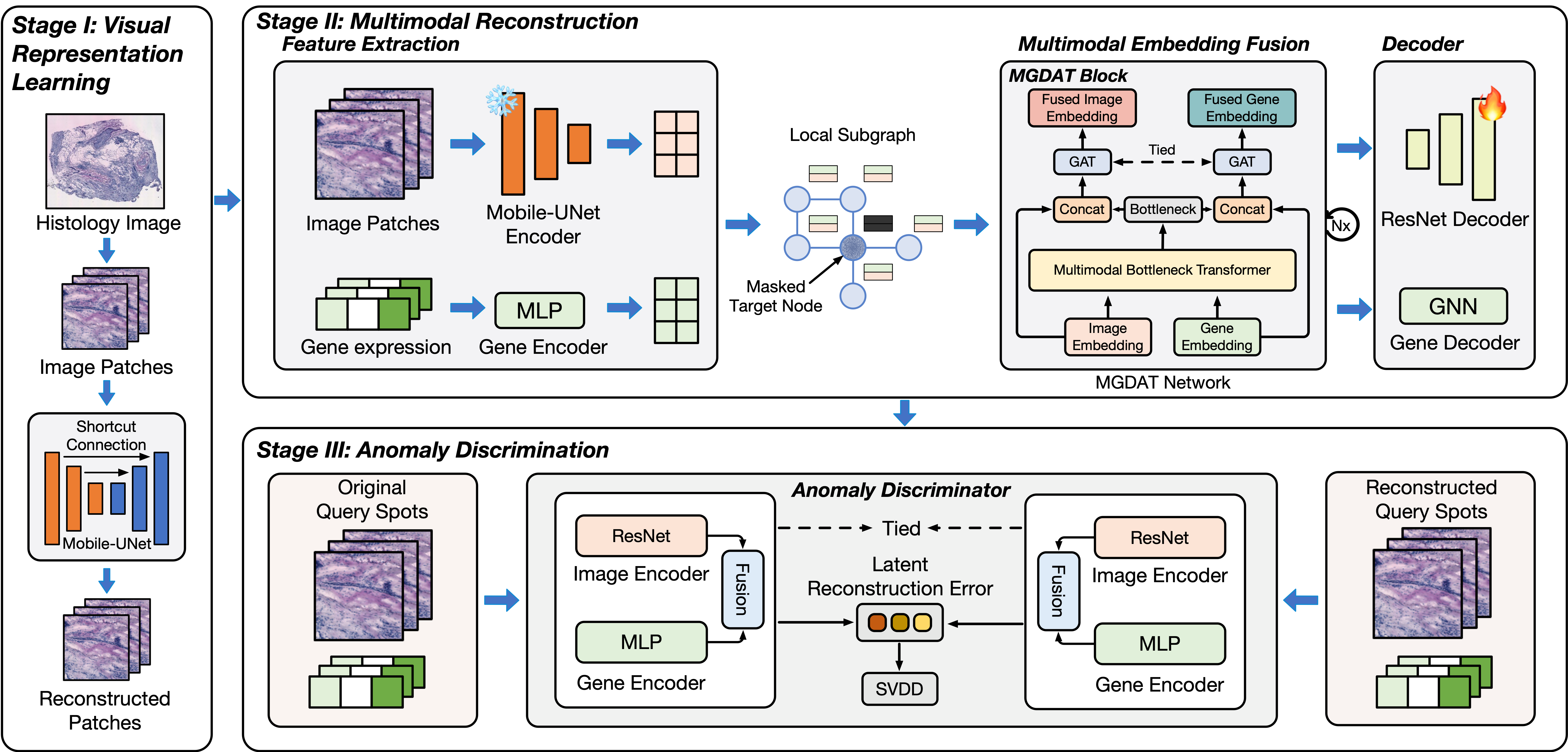}
  \caption{The workflow of MEATRD.}
  \label{fig2}
\end{figure*}

\begin{definition}
\textbf{ST Dataset and Associated Histology Image.} Let $\bm{X} \in \mathbb{R}^{N\times G}$ be a ST dataset, where $N$ is the number of tissue spots and $G$ is the number of genes. $S_N$ and $S_G$ denote the set of spots and genes, respectively. $\bm{X}_{i,j}$ represents the the read counts of gene $j$ at spot $i$, and $\bm{x}_i\in \mathbb{R}^G$ represents the gene expression profile at spot $i$. Let $\bm{P}\in \mathbb{R}^{H\times W\times C}$ be the associated histology image, where $H, W$, and $C$ are the height, width, and number of channels, respectively. 
\end{definition}

\begin{definition} \label{graph definition}
\textbf{Graph Representation of ST Dataset and Histology Image.} For a given ST dataset $\bm{X}$, the associated histology image $\bm{P}$ is segmented into $N$ patches, where $\bm{P}_i\in \mathbb{R}^{h\times w\times C}$ denotes the patch centered around spot $i\in S_N$, with height $h$ and width $w$. Then spots are modeled as nodes on an unweighted, attributed graph $G(S_N,A,\mathcal{\bm{Z}})$, where $A\in\{0,1\}^{N\times N}$ is the adjacency matrix, and $\mathcal{\bm{Z}}\coloneqq[\mathcal{\bm{Z}}_{image}||\mathcal{\bm{Z}}_{gene}]$ is the node feature matrix. $\mathcal{\bm{Z}}_{img}\in \mathbb{R}^{N\times D}$ and $\mathcal{\bm{Z}}_{gene}\in \mathbb{R}^{N\times D}$ are embeddings of image patches and gene expression profiles of spots. $A(i,j)=1$ if node $j\in n(i)$, where $n(i)$ is the set of $k$-nearest neighbors of node $i$, and $A(i,j)=0$ otherwise. $k$ is typically set to be six due to the hexagonal arrangement of spots ~\cite{xu2024detecting}.
\end{definition}

\begin{definition}
\textbf{Problem Definition.} Let $\mathcal{X}$ and $\mathcal{P}$ denote the target ST dataset and associated histology image, respectively. Similarly, let $\bm{X}$ and $\bm{P}$ denote the reference ST dataset and associated histology image, respectively. We define $y_i \in \{0,1\}$ as the label for spot $i$, where $y_i=1$ indicates an anomalous spot, and $y_i=0$ otherwise. Note, $y_i=0, \forall i \in \bm{X}$; $y_{j}\in \{0,1\}, \forall j \in \mathcal{X}$. The task of ATR detection is defined as identifying the subset of anomalous spots within the target dataset: $\mathbb{S}=\{\mathcal{x}_i|y_{\mathcal{x}_i}=1, \forall \mathcal{x}_i \in \mathcal{X}\}$, using a model trained exclusively on $\bm{X}$ and $\bm{P}$. \footnote{Related work is in supplementary material A due to space limitation.}
\end{definition}

\section{Method}

As shown in Figure 2, the workflow of MEATRD includes three stages: Stage I extract visual features from histology image patch of each spot; Stage II focuses on the learning of reconstructions of image patches and gene expression profiles using multimodal embeddings generated by a MGADT network; Stage III entails the training of an anomaly discriminator based on latent multimodal reconstruction errors.

\subsection{Extracting Visual Features of Histology Image Patches (Stage I)}
Initially, whole slide images are segmented into 32x32 patches centered around each spot in the ST dataset \cite{zong2022const}. The visual manifolds of these image patches are obtained using a Mobile-Unet, with an encoder consisting of downsampling convolutional layers and inverted residual blocks. Its decoder comprises upsampling deconvolutional layers and inverted residual blocks, connected to the encoder via shortcut connections. 

This design not only inherits the merits of U-Net in extracting visual features from histology images but also boosts computational efficiency by reducing the model's parameters. Given a histology image patch $\bm{P}_i$ for spot $i \in S_N$, the Mobile-Unet is pretrained to reconstruct it as $\widehat{\bm{P}}_i$, with a pretraining loss that is a mix of a perceptual loss, based on the Structural Similarity Index (SSIM), and an $L1$ reconstruction loss:
\begin{equation}
    \widehat{\bm{P}}_i \coloneqq D_1(E_1(\bm{P}_i)),\quad \bm{z}_i \in \mathbb{R}^D \coloneqq E_1(\bm{P}_i)
\end{equation} 
\begin{equation}
\mathcal{L}_{perc}=-\mathrm{SSIM}(\bm{P}_i, \widehat{\bm{P}}_i), \mathcal{L}_1=||\bm{P}_i-\widehat{\bm{P}}_i||_1
\end{equation}
\begin{equation}
\mathrm{SSIM}(\bm{X},\bm{Y})=\frac{(2\mu_{\bm{X}}\mu_{\bm{Y}}+C_1)(2\sigma_{\bm{X},\bm{Y}}+C_2)}{(\mu_{\bm{X}}^2+\mu_{\bm{Y}}^2+C_1)(\sigma_{\bm{X}}^2+\sigma_{\bm{Y}}^2+C_2)}
\end{equation} 
\begin{equation}\label{pretrain loss}
    \mathcal{L}_{pretrain}= \mathcal{L}_{perc}+ \mathcal{L}_1.
\end{equation}
where $\mu_{*}$ and $\sigma_{*}^2$ are the average intensity and variance of $* \in \{\bm{X},\bm{Y}\}$, respectively. $C_1$ and $C_2$ represent two constants to stabilize the division with a weak denominator. SSIM and $\mathcal{L}_1$ measure the structural similarities and pixel-by-pixel discrepancies between the original and reconstructed images, respectively. Then, pretraining loss enhances the representation learning of complex histology images by accounting for both contextual integrity, via $\mathcal{L}_{perc}$, and local details via $\mathcal{L}_1$ \cite{okada2021dreaming}. Following training, $E_1$ is used to yield image patch embeddings for each spot $i \in S_N$. Finally, unlike complex tissue images, which need to be converted into semantically meaningful representations in the first place, gene data have much clearer semantics. Therefore, MEATRD do not require a pretext representation learning stage for gene data. Rather, we use a two-layer MLP in stage II to rasterize gene data before feeding them into MGDAT blocks, where graph-based gene encoding takes places.

\subsection{Masked Graph Dual-Attention Transformer Network (Stage II)}
To generate information-rich multimodal spot embeddings for reconstruction, we fuse histology image patches and gene expression profiles while incorporating contextual inter-dependencies among spots to reveal their biological characteristics. This is achieved by modeling spots as nodes in an attributed graph $G(V,A,\mathcal{\bm{Z}})$, as described in \Cref{graph definition}, on top of which node representations are learned using an innovative masked graph attention network, termed MGDAT. This network, comprising a series of MGDAT blocks, allows information sharing across both data modality and graph nodes. Within each MGDAT block, nodes to be reconstructed are masked before aggregating fused gene and imagery attributes of their neighboring nodes via attention-based mechanism.  

Formally speaking, let $G_i(V_i,A_i,\mathcal{\bm{Z}}_i)$ denote the subgraph of a target node $i$ that covers up to its 3-hop neighbors, where $V_i,A_i$ and $\mathcal{\bm{Z}}_i$ denote the node set, adjacency matrix, and node attribute matrix of $G_i$, respectively. We set the number of hops to be 3 as using more hops will result in over-smoothing while fewer hops will significantly limit the information spread in the graph. $\bm{z}_{i}\in \mathbb{R}^D$ represents node $i$'s imagery attribute derived from Stage I, and $\bm{\zeta}_{i}\in \mathbb{R}^D$ represents node $i$'s gene attribute rasterised from its gene expression vector $\bm{x}_i$ using a two-layer MLP. $\bm{z}_{i}$ and $\bm{\zeta}_{i}$ are substituted with learnable mask tokens $\bm{z}_{[M]}\in \mathbb{R}^D$and $\bm{\zeta}_{[M]} \in \mathbb{R}^D$. 

This target-node-masking serves to prevent self-information leakage of the target node into its own embedding for reconstruction, thus alleviating the potential model over-generalization issue. $G_i$ is processed by the MGDAT network through its series of MGDAT blocks. For the $l$-th block, $l\in\{0,1,2\}$, the inputs are embeddings of the image patches, $\mathcal{\bm{Z}}_{img,i}^{(l)}\in\mathbb{R}^{|V_i|\times D}$, and the gene expression profiles, $\mathcal{\bm{Z}}_{gene,i}^{(l)}\in \mathbb{R}^{|V_i| \times D}$, of $V_i$. The initial embeddings are defined as $\mathcal{\bm{Z}}_{img,i}^{(0)}\coloneqq  [\bm{z}_1,..,\bm{z}_{[M]},..\bm{z}_{V_i}]^\top$ and $\mathcal{\bm{Z}}_{gene,i}^{(0)}\coloneqq [\bm{\zeta}_1,..,\bm{\zeta}_{[M]},..,\bm{\zeta}_{V_i}]^\top$. The $l$-th MGDAT block yields fused bottleneck embeddings $\mathcal{\bm{Z}}_{fb,i}^{(l)}\in \mathbb{R}^{|V_i|\times D'}, D'\ll D$ as follows: 
\begin{equation}\label{fb-transformer}
\mathcal{\bm{Z}}_{fb,i}^{(l)}=\mathrm{Trm}\left([\mathcal{\bm{Z}}_{img,i}^{(l)}||\mathcal{\bm{Z}}_{gene,i}^{(l)}];W_{Q}^{(l)},W_{K}^{(l)},W_{V}^{(l)}\right)
\end{equation}
where $\mathrm{Trm}$ denotes Transformer. $W_{Q}^{(l)},W_{K}^{(l)},W_{V}^{(l)}\in \mathbb{R}^{2D\times D'}$ are query, key and value parameters, respectively. $\mathcal{\bm{Z}}_{fb}^{(l)}$ serves as a bottleneck to collate and condense modality-specific, task-relevant information from image and ST data ~\cite{nagrani2021attention}, as theoretically demonstrated in supplementary material D. By concatenating $\mathcal{\bm{Z}}_{fb}^{(l)}$ with $\mathcal{\bm{Z}}_{img}^{(l)}$ and $\mathcal{\bm{Z}}_{gene}^{(l)}$, the two data modalities are bridged, facilitating access to their complementary task-relevant information. Next, multimodal information of $l$-hop neighbors is aggregated as follows:
\begin{equation}
h_{*,i}^{(l)} = [\mathcal{\bm{Z}}_{*,i}^{(l)}||\mathcal{\bm{Z}}_{fb,i}^{(l)}],\quad \text{where}\ * \in \{img, gene\},
\end{equation}
\begin{equation}
\alpha_{*,ij}^{(l)} = \frac{\exp(w_{att}^{(l)}\sigma(W^{(l)}[h_{*,i}^{(l)}||h_{*,j}^{(l)}]))}{\sum_{k\in \mathcal{N}_i} \exp(w_{att}^{(l)}\sigma(W^{(l)}[h_{*,i}^{(l)}||h_{*,k}^{(l)}])))},
\end{equation}

\begin{equation}\label{graph-attention}
\mathcal{\bm{Z}}_{*,i}^{(l+1)}= \sigma(\sum_{j\in \mathcal{N}_i} \alpha_{*,ij}^{(l)} W^{(l)} h_{*,j}^{(l)}),
\end{equation} 
where $\sigma$ denotes LeakyReLU, $w_{att}^{(l)}\in \mathbb{R}^D$ and $W^{(l)}\in \mathbb{R}^{D\times (D+D')}$ denote the attention weight matrix and regular weight matrix of the $l$-th MGDAT block, respectively. 

The histology image patches of $V_i$ are reconstructed from the final image embeddings, $\mathcal{\bm{Z}}_{img,i}$, through a ResNet-based deconvolutional decoder $D_2$, while the gene expression profiles of $V_i$ are reconstructed from the final gene embeddings, $\mathcal{\bm{Z}}_{gene,i}$, through a single-layer GNN-based decoder $D_3$ \cite{hou2023graphmae2}:
\begin{equation}
    \widetilde{\bm{P}}_i\coloneqq  D_2(\mathcal{\bm{Z}}_{img,i}), \quad \widetilde{\bm{x}}_i\coloneqq D_3(\mathcal{\bm{Z}}_{gene,i})
\end{equation}
The training loss of stage II includes an image-level loss, same as that defined in \Cref{pretrain loss}, and a gene-level reconstruction loss measured in scaled cosine errors:

\begin{equation}\label{rec error} \begin{split}
    \mathcal{L}_{rec}=&\alpha\cdot \sum_{i}^N(-\mathrm{SSIM}(\bm{P}_i, \widetilde{\bm{P}}_i)+||\bm{P}_i-\widetilde{\bm{P}}_i||_1) \\ &+ (1-\alpha)\cdot \sum_{i}^N \mathcal{L}_{SCE}(\bm{x}_i,\widetilde{\bm{x}}_i),
\end{split} \end{equation}
\begin{equation}
    \mathcal{L}_{SCE}(\bm{x}_i,\widetilde{\bm{x}}_i)=\left(1-\frac{\bm{x}_i^\top\widetilde{\bm{x}}_i}{||\bm{x}_i||\cdot||\widetilde{\bm{x}}_i||}\right)^\gamma, \gamma\ge 1,
\end{equation}
where $0<\alpha<1$ is the weight assigned to image reconstruction loss, $\gamma$ is a scaling factor. The workflow of Stage II is illustrated in \Cref{fig2} and Algorithm 1 in supplementary material C.

\begin{table*}[t]
\centering
\renewcommand{\arraystretch}{1.2}
\resizebox{\textwidth}{!}{
\begin{tabular}{c"c"cc"ccc"ccccc}
\Xhline{1.2pt}
\multirow{3}{*}{\centering \makecell{Target \\ Dataset}}& \multirow{3}{*}{\centering Metric}& \multicolumn{10}{c}{\large Method}\\

\Xcline{3-12}{1pt}
& & \multicolumn{2}{c"}{Multimodal-based} & \multicolumn{3}{c"}{Image-based} & \multicolumn{5}{c}{ST-based} \\ 

\Xcline{3-12}{1pt}
& & MEATRD & M3DM  & SimpleNet & f-AnoGAN & Patch SVDD & DOMINANT & PREM & Spatial-ID & scmap & CAMLU \\ 

\Xhline{1pt}
\multirow{2}{*}{10x-hBC-A1} & AUC & $\mathbf{0.756}_{\pm 0.007}$ & $0.520_{\pm 0.046}$ & $0.543_{\pm 0.095}$ & $\underline{0.642}_{\pm 0.109}$ & $0.614_{\pm 0.005}$ & $0.488_{\pm 0.117}$ & $0.211_{\pm 0.004}$ & $0.463_{\pm 0.067}$ & $0.500_{\pm 0.000}$ & $0.516_{\pm 0.021}$ \\

& F1 & $0.892_{\pm0.007}$ & $0.875_{\pm0.0013}$ & $0.875_{\pm0.011}$ & $0.892_{\pm0.017}$ & $\underline{0.892}_{\pm0.005}$ & $0.885_{\pm0.017}$ & $0.865_{\pm0.000}$ & $0.870_{\pm0.004}$ & $\mathbf{0.934}_{\pm0.000}$ & $0.376_{\pm0.383}$ \\
 
\Xhline{1pt}
\multirow{2}{*}{10x-hBC-B1} & AUC &  $\mathbf{0.920}_{\pm0.028}$ & $0.505_{\pm0.029}$ & $0.554_{\pm0.135}$ & $\underline{0.736}_{\pm0.144}$ & $0.442_{\pm0.025}$ & $0.698_{\pm0.077}$ & $0.288_{\pm0.006}$ & $0.195_{\pm0.083}$ & $0.504_{\pm0.000}$ & $0.667_{\pm0.160}$ \\

& F1 &  $\mathbf{0.841}_{\pm0.022}$ & $0.210_{\pm0.027}$ & $0.302_{\pm0.127}$ & $\underline{0.568}_{\pm0.176}$ & $0.225_{\pm0.025}$ & $0.352_{\pm0.143}$ & $0.073_{\pm0.008}$ & $0.067_{\pm0.064}$ & $0.354_{\pm0.000}$ & $0.428_{\pm0.365}$ \\ 

\Xhline{1pt}
\multirow{2}{*}{10x-hBC-C1} & AUC & $\mathbf{0.715}_{\pm0.017}$ & $0.540_{\pm0.034}$ & $0.501_{\pm0.099}$ & $0.485_{\pm0.035}$ & $0.401_{\pm0.0032}$ & $0.633_{\pm0.101}$ & $0.419_{\pm0.004}$ & $0.384_{\pm0.055}$ & $0.500_{\pm0.000}$ & $\underline{0.660}_{\pm0.156}$ \\ 

& F1 & $\mathbf{0.842}_{\pm0.021}$ & $0.735_{\pm0.028}$ & $0.735_{\pm0.024}$ & $0.713_{\pm0.021}$ & $0.661_{\pm0.005}$ & $0.769_{\pm0.040}$ & $0.695_{\pm0.006}$ & $0.687_{\pm0.013}$ & $\underline{0.838}_{\pm0.000}$ & $0.808_{\pm0.021}$ \\
 
\Xhline{1pt}
\multirow{2}{*}{10x-hBC-D1} & AUC & $\mathbf{0.803}_{\pm0.017}$ & $0.488_{\pm0.011}$ & $0.485_{\pm0.111}$ & $0.276_{\pm0.072}$ & $0.377_{\pm0.005}$ & $0.530_{\pm0.172}$ & $0.380_{\pm0.003}$ & $0.469_{\pm0.007}$ & $0.503_{\pm0.000}$ & $\underline{0.649}_{\pm0.066}$ \\

& F1 & $\mathbf{0.698}_{\pm0.016}$ & $0.443_{\pm0.017}$ & $0.433_{\pm0.072}$ & $0.253_{\pm0.085}$ & $0.373_{\pm0.010}$ & $0.478_{\pm0.123}$ & $0.344_{\pm0.010}$ & $0.410_{\pm0.011}$ & $\underline{0.626}_{\pm0.000}$ & $0.465_{\pm0.158}$ \\

\Xhline{1pt}
\multirow{2}{*}{10x-hBC-E1} & AUC & $\mathbf{0.553}_{\pm0.046}$ & $\underline{0.536}_{\pm0.014}$ & $0.465_{\pm0.119}$ & $0.369_{\pm0.034}$ & $0.300_{\pm0.009}$ & $0.475_{\pm0.083}$ & $0.429_{\pm0.006}$ & $0.449_{\pm0.082}$ & $0.500_{\pm0.000}$ & $0.405_{\pm0.047}$ \\

& F1 & $\mathbf{0.739}_{\pm0.029}$ & $0.598_{\pm0.009}$ & $0.542_{\pm0.077}$ & $0.492_{\pm0.021}$ & $0.443_{\pm0.006}$ & $0.570_{\pm0.058}$ & $0.528_{\pm0.008}$ & $0.542_{\pm0.054}$ & $\underline{0.734}_{\pm0.000}$ & $0.081_{\pm0.095}$ \\ 

\Xhline{1pt}
\multirow{2}{*}{10x-hBC-F1} & AUC & $\mathbf{0.667}_{\pm0.009}$ & $0.485_{\pm0.046}$ & $0.476_{\pm0.017}$ & $0.493_{\pm0.011}$ & $0.483_{\pm0.005}$ & $0.477_{\pm0.074}$ & $0.379_{\pm0.004}$ & $0.380_{\pm0.074}$ & $\underline{0.500}_{\pm0.000}$ & $0.409_{\pm0.051}$ \\

& F1 & $\underline{0.858}_{\pm0.003}$ & $0.832_{\pm0.009}$ & $0.835_{\pm0.002}$ & $0.842_{\pm0.004}$ & $0.840_{\pm0.003}$ & $0.834_{\pm0.018}$ & $0.825_{\pm0.001}$ & $0.820_{\pm0.005}$ & $\mathbf{0.910}_{\pm0.000}$ & $0.036_{\pm0.022}$ \\ 

\Xhline{1pt}
\multirow{2}{*}{10x-hBC-G2} & AUC & $\mathbf{0.640}_{\pm0.079}$ & $0.524_{\pm0.016}$ & $0.482_{\pm0.074}$ & $0.457_{\pm0.016}$ & $0.430_{\pm0.008}$ & $\underline{0.576}_{\pm0.107}$ & $0.430_{\pm0.006}$ & $0.312_{\pm0.024}$ & $0.500_{\pm0.000}$ & $0.518_{\pm0.001}$ \\

& F1 & $\mathbf{0.544}_{\pm0.045}$ & $0.366_{\pm0.016}$ & $0.333_{\pm0.068}$ & $0.295_{\pm0.002}$ & $0.294_{\pm0.018}$ &  $0.434_{\pm0.095}$ & $0.273_{\pm0.006}$ & $0.214_{\pm0.029}$ & $\underline{0.510}_{\pm0.000}$ & $0.070_{\pm0.005}$ \\ 

\Xhline{1pt}
\multirow{2}{*}{10x-hBC-H1} & AUC & $\mathbf{0.732}_{\pm0.064}$ & $0.474_{\pm0.023}$ & $0.443_{\pm0.099}$ &  $\underline{0.625}_{\pm0.083}$ & $0.415_{\pm0.005}$ & $0.521_{\pm0.105}$ & $0.370_{\pm0.009}$ & $0.319_{\pm0.061}$ & $0.500_{\pm0.000}$ & $0.515_{\pm0.010}$ \\

& F1 & $\mathbf{0.516}_{\pm0.029}$ & $0.273_{\pm0.029}$ & $0.186_{\pm0.080}$ & $0.359_{\pm0.080}$ & $0.066_{\pm0.003}$ & $0.297_{\pm0.060}$ & $0.209_{\pm0.018}$ & $0.179_{\pm0.038}$ & $\underline{0.467}_{\pm0.000}$ & $0.418_{\pm0.113}$ \\ 

\Xhline{1pt}
\multirow{2}{*}{Mean} & AUC & $\mathbf{0.723}$ & $0.509$ & $0.494$ & $0.510$ & $0.433$ & $\underline{0.550}$ & $0.363$ & $0.371$ & $0.501$ & $0.542$ \\ 

& F1 & $\mathbf{0.741}$ & $0.542$ & $0.530$ & $0.552$ & $0.474$ & $0.577$ & $0.476$ & $0.474$ & $\underline{0.672}$ & $0.335$ \\ 
\Xhline{1.2pt}
\end{tabular}
}
\caption{Performance evaluation of anomalous tissue region detection across eight human breast cancer ST datasets. 
The table presents the results in terms of AUC and F1 scores, with each cell showing the average score from five independent runs and the corresponding standard deviation. The best score for each dataset is \textbf{bolded}, and the second-best score is \underline{underline}.}
\label{table:Main}
\end{table*}

\subsection{Latent Multimodal Reconstruction Loss-based Anomaly Discriminator (Stage III) }
Following Stage II, the original and reconstructed image patches of any spot $i$ are processed by a ResNet to generate their respective latent manifolds, denoted as $e_{img,i}\coloneqq \mathrm{ResNet}(\bm{P}_i)$ and $\tilde{e}_{img,i}\coloneqq \mathrm{ResNet}(\widetilde{\bm{P}}_i)$, respectively. Here, we employ a light-weight ResNet as the encoder since this stage focuses on calculating latent loss rather than for the more involved tissue image reconstruction task. Similarly, the manifolds of the original and reconstructed gene expression profiles of spot $i$ are generated by an MLP, denoted as $e_{gene,i}\coloneqq \mathrm{MLP}(\bm{x}_i)$ and $\tilde{e}_{gene,i}\coloneqq \mathrm{MLP}(\widetilde{\bm{x}}_i)$, respectively. Next, these manifolds are normalized, and a feed-forward network (FFN) maps their weighted averages to a latent space where the multimodal reconstruction error, $\ell_{rec,i}$, is calculated as follows:
\begin{equation}\label{original latent fusion}
    \bm{Z}_{fused,i}=\mathrm{FFN}\left(\beta\cdot \frac{e_{img,i}}{||e_{img,i}||}+(1-\beta)\cdot \frac{e_{gene,i}}{||e_{gene,i}||}\right) 
\end{equation}
\begin{equation}\label{reconstructed latent fusion}
     \widetilde{\bm{Z}}_{fused,i}=\mathrm{FFN}\left(\beta\cdot \frac{\tilde{e}_{img,i}}{||\tilde{e}_{img,i}||}+(1-\beta)\cdot \frac{\tilde{e}_{gene,i}}{||\tilde{e}_{gene,i}||}\right)
\end{equation}
\begin{equation}\label{reconst error}
    \ell_{rec,i}=\bm{Z}_{fused,i}-\widetilde{\bm{Z}}_{fused,i}
\end{equation}
where $0<\beta<1$ represents the relative weight assigned to the histology image. We then train a one-class classifier to collapse latent reconstruction errors of inliers into a compact hypersphere using the loss function:
\begin{equation}
\mathcal{L}_{occ} = \left\|\ell_{rec,i} - c \right\|^2
\end{equation}
where $c=\frac{1}{N}\sum\limits_{k=1}^{N} \ell_{rec,k}$. The training workflow of Stage III is also illustrated in Algorithm 2 of supplementary material C. At inference time, the anomaly score (AS) of a query spot $j$ is computed as the distance of its latent reconstruction loss to $c$:
\begin{equation}
    AS_j\coloneqq \left\|\ell_{rec,j} - c \right\|^2
\end{equation}
Given the observation that a gap exists between anomaly scores of inliers and anomalies (Figure 1 in supplementary material B), the AS threshold for discriminating inliers and anomalies is automatically determined using a Maximum A Posteriori-Expectation-Maximization (MAP-EM)-based mixture model, as detailed in supplementary material B.  

\begin{figure*}[t]
  \centering
  \includegraphics[width=0.8\textwidth]{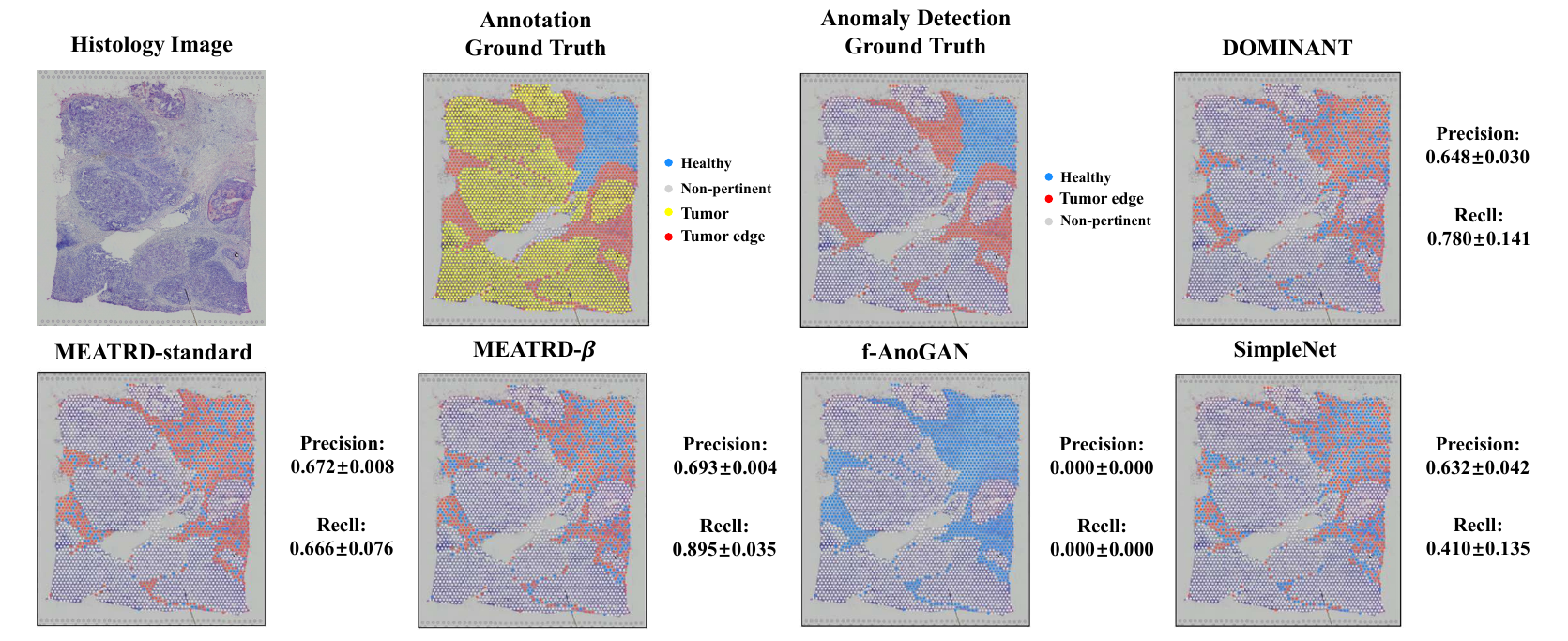}
  \caption{Visualized detection results of tumor edge regions that visually resemble the adjacent normal tissues in the 10x-hBC-I1 dataset. The first row, from left to right, displays the original histology image, the one annotated with ground truth region labels, the one highlighting the tumor edge region (in red) and the adjacent healthy region (in blue), and the one annotated with ATRs identified by DOMINANT. The second row presents images annotated with ATRs identified by their respective methods. 
The performance of each method is also quantified using mean precision and recall scores over five independent runs. These metrics, along with their standard deviations, are displayed right to each method's panel. }
\label{figr3}
\end{figure*}

\section{Experiments}
\subsection{Experimental Settings}

{\bfseries Datasets.}
MEATRD is extensively evaluated across eight breast cancer datasets and four primary sclerosing cholangitis (PSC) datasets. (see supplementary material E for data description and preprocessing).

\noindent {\bfseries Baselines.}
We select nine SOTA image-based, ST-based, and multi-modal AD methods as baselines. Image-based methods include two one-class classification methods, Patch SVDD \cite{yi2020patch} and SimpleNet \cite{liu2023simplenet}, alongside a reconstruction-based method, f-AnoGAN \cite{schlegl2019f}. For ST-based methods, we consider  scmap \cite{kiselev2018scmap},  a classification-based method, CAMLU \cite{CAMLU}, a reconstruction-based method, PREM \cite{pan2023prem}, a discriminative graph method, DOMINANT \cite{ding2019deep}, a generative graph method, and Spatial-ID \cite{shen2022spatial}, a classification-based method tailored for ST data. Additionally, M3DM \cite{wang2023multimodal} is chosen as a representative multimodal baseline.\\

\noindent {\bfseries Evaluation Protocols.}
AUC and F1 scores are used to evaluate the accuracy of ATR detection. For a fair comparison, the F1 score is calculated with the threshold matching the actual proportion of true anomalies \cite{ICL}. Reported metrics and standard deviations are averaged over five independent runs.

\subsection{Anomalous Tissue Region Detection}
In this experiment, as listed in supplementary material F, MEATRD is trained on eight human normal breast ST datasets (i.e., 10x-hNB-\{v03-v10\}) and tested on eight human breast cancer (i.e., 10x-hBC-\{A1-H1\}) ST datasets. \Cref{table:Main} showcases MEATRD's superiority over baselines in detecting ATRs across datasets, consistently ranking first in AUC scores and six times in F1 scores. It outperforms the second-best performing method with an average leap of 17.45\% in AUC scores and 10.31\% in F1 scores. Furthermore, Table 4 in supplementary material F indicates that our model performed well in detecting PSCs, demonstrating its generalization to other types of diseases. Generally, methods that use ST data, for example, DOMINANT, scmap and CAMLU, tend to outperform those that rely solely on histology images, indicating the pivotal role of gene expression information provided by ST data in aiding the detection of ATRs, especially those visually similar to normal tissues. Moreover, we find that, DOMINANT, a graph-based AD method, prevails over other baselines, and that M3DM, a multimodal method that utilizes both image and ST data yet fails to account for spatial relationships among spots, does not perform as well as MEATRD. These observations emphasize the value of spatial contextual information in accurate ATR detection.  

\subsection{Discovering Anomalous Tissue Regions Visually Similar to Normal Tissues}
To evaluate the efficacy of MEATRD in detecting ATRs with minimal visual distinctions from normal tissues, we conduct a comparative analysis on the 10x-hBC-I1 ST dataset, which encompasses a tumor edge region that visually blends with the adjacent normal tissues, as indicated in red in the annotated histology image from Figure \ref{figr3}. Our analysis includes: the standard MEATRD implementation (MEATRD-standard); MEATRD-$\beta$, a variant that downplays the influence of histology image by decreasing $\beta$ from 0.5 to 0.1 in \Cref {original latent fusion} and \Cref{reconstructed latent fusion}; DOMINANT, the top performing baseline utilizing ST data from the previous section; two leading image-based AD methods, f-AnoGAN and SimpleNet. The results, visually presented in Figure \ref{figr3}  demonstrate that MEATRD-$\beta$ more accurately identifies spots within the tumor edge region as anomalous, compared to the other competing methods. This finding is quantitatively supported by its highest precision (0.693) and recall (0.895) scores. The observation that MEATRD-standard, MEATRD-$\beta$, and DOMINANT prevail over the image-based AD methods underscores the value of using ST data for pinpointing pathogenic tissue regions that visually resemble normal tissues. Furthermore, DOMINANT's marginal performance edge over MEATRD-standard suggests that in this specific context, the histology image contributes very limited additional information. Indeed, MEATRD-$\beta$, which places greater emphasis on ST data, showcases an improved performance of 3.1\% in precision and 34.4\% in recall, compared to MEATRD-standard. Nonetheless, for scenarios involving low-quality ST data and visually traceable ATRs, incorporating visual cues from histology images are undoubtedly beneficial, as established in our prior analysis and ablation study.

\begin{table*}[htbp]
\vspace{-.2em}
\centering
%
\subfloat[
\textbf{Parameter $\alpha$}. A higher $\alpha$ in equation (10) indicates a higher gradient of image reconstruction loss.
\label{tab:decoder_depth}
]{
\centering
\begin{minipage}{0.29\linewidth}{\begin{center}
\tablestyle{4pt}{1.05}
\begin{tabular}{x{18}x{24}x{24}}
case & AUC & F1 \\
\shline
0.9 & 0.678 & 0.696 \\
0.5 & \baseline{\textbf{0.723}} &\baseline{\textbf{0.741}} \\
0.1 & 0.709 & 0.725 \\
\end{tabular}
\end{center}}\end{minipage}
}
\hspace{2em}
%
\subfloat[
\textbf{Parameter $\beta$}. A higher $\beta$ in equation (12) and (13) means the image plays a larger role in the final decision.
]{
\begin{minipage}{0.29\linewidth}{\begin{center}
\tablestyle{4pt}{1.05}
\begin{tabular}{x{18}x{24}x{24}}
case & AUC & F1 \\
\shline
0.9 &0.654 	&0.668   \\
0.5 &\baseline{\textbf{0.723}} 	&\baseline{\textbf{0.741}}\\ 
0.1 &0.699	 	&0.718    \\ 
\end{tabular}
\end{center}}\end{minipage}
}
\hspace{2em}
%
\subfloat[
\textbf{Embedding dimension}. The dimension of visual embedding $\mathcal{\bm{Z}}_{img,i}^{(l)}$ and gene embedding $\mathcal{\bm{Z}}_{gene,i}^{(l)}$ in equation (5). 
]{
\begin{minipage}{0.29\linewidth}{\begin{center}
\tablestyle{4pt}{1.05}
\begin{tabular}{x{18}x{24}x{24}}
dim & AUC & F1 \\
\shline
128	&0.705	&0.726 \\
256	&\baseline{\textbf{0.723}}	&\baseline{\textbf{0.741}} \\
512	&0.721	&0.735      \\ 
\end{tabular}
\end{center}}\end{minipage}
}
\hspace{2em}
%
\subfloat[
\textbf{Bottleneck dimension}. The dimension of fused bottleneck embeddings $\mathcal{\bm{Z}}_{fb,i}^{(l)}$ in equation (5).
]{
\centering
\begin{minipage}{0.2\linewidth}{\begin{center}
\tablestyle{4pt}{1.05}
\begin{tabular}{x{18}x{24}x{24}}
dim & AUC & F1 \\
\shline
16	&\baseline{\textbf{0.723}} 	&\baseline{\textbf{0.741}} \\
64	&0.715 		&0.728 \\
256	&0.682 	&0.711 \\
\end{tabular}
\end{center}}\end{minipage}
}
\hspace{2em}
%
\subfloat[
\textbf{Detection dimension}. The dimension of the latent reconstruction error $\ell_{rec,i}$ in equation (14). 
]{
\begin{minipage}{0.2\linewidth}{\begin{center}
\tablestyle{4pt}{1.05}
\begin{tabular}{x{18}x{24}x{24}}
dim & AUC & F1 \\
\shline
64 &0.606 	&0.623 \\ 
128 &0.720 	&0.732 \\ 
256 &\baseline{\textbf{0.723}} 	&\baseline{\textbf{0.741}}    \\ 
\end{tabular}
\end{center}}\end{minipage}
}
\hspace{2em}
%
\subfloat[
\textbf{MGDAT Layers}. Three-layer MGDAT blocks are
effective.
]{
\begin{minipage}{0.2\linewidth}{\begin{center}
\tablestyle{4pt}{1.05}
\begin{tabular}{x{18}x{24}x{24}}
blocks & AUC & F1 \\
\shline
2	&0.694	&0.719 \\
3	&\baseline{\textbf{0.723}}	&\baseline{\textbf{0.741}}\\
4	&0.533	&0.565    \\ 
\end{tabular}
\end{center}}\end{minipage}
}
\hspace{2em}
%
\subfloat[
\textbf{GAT Attn Heads}. Using two-head attention in MGDAT blocks is more reliable.
]{
\begin{minipage}{0.22\linewidth}{\begin{center}
\tablestyle{4pt}{1.05}
\begin{tabular}{x{18}x{24}x{24}}
case & AUC & F1 \\
\shline
1	&0.718	&0.730 \\
2	&\baseline{\textbf{0.723}}	&\baseline{\textbf{0.741}}\\
4	&0.721	&0.737    \\ 
\end{tabular}
\end{center}}\end{minipage}
}
\hspace{2em}
\vspace{-.1em}
\caption{Sensitivity analysis of hyperparameter in MEATRD across eight human breast cancer datasets. Default settings are marked in \colorbox{baselinecolor}{gray}.}
\label{table:hyper} \vspace{-.5em}
\end{table*}

\subsection{Ablation Studies}
We conduct ablation studies over the eight human breast cancer ST datasets (i.e., 10x-hBC-\{A1-H1\}) to investigate the effects of MEATRD's key components in ATR detection. These components include using multiple data modality,  multimodal data fusion using fused bottleneck embedding, masking for target node reconstruction, multimodal reconstruction losses in the one-class classifier in Stage III, enlarging anomaly score discrepancy between inliers and anomalies using a one-class classifier, using Mobile-Unet as the pretraining backbone in Stage I. The descriptions detailed in the \textit{Ablation Studies} section in supplementary material F, demonstrate that removing any of these components leads to suboptimal performance. This is due to the inefficient use of cross-modal complementary information, less effective addressing of model over-generalization, and increased sensitivity to reference-target domain shifts. 

\begin{table}[tbp]
\centering
\fontsize{6pt}{6pt}\selectfont
\setlength{\tabcolsep}{2.6pt}
\scriptsize
\begin{tabular*}{\linewidth}{c|cccccc|c}
\Xhline{1.2pt}
\multirow{2}{*}{\centering Metric} &\multicolumn{7}{c}{\small Ablation study}\\
\cline{2-8}
 & ST only & Image only & w/o MGDAT & w/o TNM & w/o RE & w/o OC & Full\\ 
\hline
AUC   & 0.631  & 0.497  & 0.639 & 0.655  & 0.642  & 0.584 & \textbf{0.723} \\
F1    & 0.667  & 0.544  & 0.689  & 0.699  & 0.685 & 0.631 & \textbf{0.741} \\
\Xhline{1.2pt}
\end{tabular*}
\caption{Ablation study of key components in MEATRD across eight human breast cancer datasets. Method performance is gauged through average AUC and F1 scores. ''Full'' represents the complete MEATRD model. ''ST Only'' and ''Image Only'' utilize only ST data or histology images, respectively. ''w/o MGDAT'' omits the MGDAT block. ''w/o TNM'' omits the target-node-masking technique. ''w/o RE'' substitutes the latent multimodal reconstruction errors with direct spot embeddings for input to the discriminative model in Stage III. ''w/o OC'' discards the entire stage III and utilizes spot reconstruction errors as anomaly scores for ATR detection.}
\label{table:ablation}
\end{table}

\subsection{Sensitivity Analysis}
Here, we conduct sensitivity analyses on eight 10x-hBC datasets to examine the effects of MEATRD’s key hyperparameters, including $\alpha\ \text{and}\ \beta$, which control the relative weights between gene and image modalities in Stage II and III; the dimensions of visual and gene embedding from Stage I, bottleneck embedding in Stage II, and the inputs to the one-classification classifier in Stage III; the number of MGDAT layers and its attention heads. The effect of these parameters on MEATRD's performance, measured by AUC and F1 scores, is presented in \Cref{table:hyper}. Detailed results are provided in supplementary material F.3.

\subsection{Complexity Analysis}
We analyze the model complexity of MEATRD across its three stages by evaluating the number of parameters, computational performance (MFlops), time complexity, training time, and inference time. We also compare these metrics with the nine baseline methods. Detailed results are provided in supplementary material F.4. In summary, MEATRD is scalable to the number of spots and edges (proportional to the number of spots due to the adjacency matrix setting) and demonstrates good efficiency in our experiments.

\section{Conclusion}
In this paper, we propose MEATRD, a pilot method that integrates histology images and ST data to enhance ATR detection at both visual and molecular levels. MEATRD treats tissue spots as nodes in an attributed graph to embed spatial relationships into their representations. The MGDAT network, a key innovation of MEATRD, facilitates effective cross-node and cross-modality information exchange, enabling comprehensive graph representation learning. MEATRD harmonizes one-class classification with reconstruction deviation-based AD detection, simultaneously addressing the challenges of reference-target domain shift and model over-generalization. Rigorous evaluations on a suite of real ST datasets have demonstrated MEATRD's superiority over various SOTA AD methods in detecting ATRs including those that are visually akin to contextual normal tissues. Furthermore, MEATRD also offers a framework generalizable to other multimodal AD tasks involving compatible imagery and graph data modalities. 

\section{Acknowledgments}
The project is funded by the Excellent Young Scientist Fund
of Wuhan City (Grant No. 21129040740) to X.S.

\bibliography{aaai25}

\appendix

\clearpage
\begin{center}
  \textbf{\LARGE Supplementary Material}
\end{center}

\section{Related Work}
\subsection{Localized Anomaly Detection in Image}
Related works in this field can be broadly divided into two categories: one-class classification methods and reconstruction-based methods. The former aims to delineate normal data distributions and boundaries in a latent space at training time, labeling instances occurring in low-probability density regions (i.e., falling outside the boundary) as anomalies at test time \cite{shvetsova2021anomaly}. For example, Patch SVDD \cite{yi2020patch} assesses anomalies according to their proximity to the nearest inlier in a latent space that is learned by minimizing distances between nearby inliers. Another example, SimpleNet \cite{liu2023simplenet} creates pseudo-anomalies by introducing random noises to extracted visual features of inliers, and trains a separating hyperplane-based discriminator for anomaly differentiation. The main limitation of these methods is their dependency on effective representation learning \cite{sohn2020learning}, which may be compromised by batch effects between reference and target datasets \cite{ouardini2019towards}. 

On the other hand, reconstruction-based methods, trained on normal data only, posit that inliers can be reconstructed more faithfully from their latent manifolds than anomalies. For instance, f-AnoGan \cite{schlegl2019f}, a WGAN\cite{arjovsky2017wasserstein}-based method for AD in medical images, employs a discriminator-guided encoder to obtain reconstruction residuals as anomaly scores. While theoretically more robust to batch effects since only anomalies are identified based on reconstruction errors within the same batch, these methods may suffer from model over-generalization, leading to minor reconstruction errors for anomalies \cite{liu2023simplenet,ristea2022self}. Overall, methods for localized AD in images often overlook the contextual surroundings \cite{sabour2017dynamic,ristea2022self}, although some, such as SSPCAB \cite{ristea2022self} and PatchCore \cite{roth2022towards}, attempt to aggregate information from neighboring patches through simplified means such as adaptive averaging pooling. In contrast, our method, by virtue of the MGDAT network, can comprehensively harness contextual information for improved AD.  \cite{he2015convolutional}

\subsection{Anomaly Detection using Gene Expression Data}
Tissue spots in ST closely resemble single cells in single-cell RNA sequencing (scRNA-seq), augmented with spatial location information. This similarity offers an opportunity to apply anomalous cell (AC) detection methods to ATR detection in ST. Traditional AC detection methods treat scRNA-seq data as tabular, identifying ACs through cell type classification tasks. For example, scmap \cite{kiselev2018scmap} computes gene expression similarities between query cells and centroids of known cell types, designating those below a threshold as anomalies. Such classification-based methods depend heavily on labeled references, often a scarce and costly resource. To circumvent this limitation, CAMLU \cite{CAMLU}, a reconstruction-based method utilizing unlabeled reference data only, identifies ACs in the target dataset using informative genes selected as per their reconstruction deviations. However, these methods neglect spatial information inherent to ST data, which is crucial for accurate ATR detection. To bridge this gap, specialized methods have been developed, typically leveraging graph neural networks (GNN) to incorporate spatial relationships among spots \cite{hu2021spagcn,dong2022deciphering}. Among these, to the best of our knowledge, Spatial-ID \cite{shen2022spatial} is currently the sole method capable of identifying ATRs by utilizing a classifier, pre-trained on labeled scRNA-seq data, to classify spots based on their latent manifolds generated via a variational graph autoencoder. Spots with uncertain soft assignments are labeled as anomalies. However, Spatial-ID, like many other gene-oriented AD methods, is prone to high false positive rates due to its reliance on assignment uncertainties, often arising from inlier similarities rather than genuine anomalies \cite{CAMLU}. 

An alternative strategy, bypassing the classification framework, involves modeling ST data as an attributed graph and applying node-level graph anomaly detection (GAD) methods, which can be generative or discriminative \cite{pan2023prem}. For instance, PREM \cite{pan2023prem} determines anomalous nodes based on their anomaly scores calculated as the dissimilarity between ego and neighbor node embeddings, which are generated through graph contrastive learning. DOMINANT \cite{ding2019deep}, a generative GAD method, leverages a graph convolutional network (GCN) \cite{kipf2016semi} to reconstruct both nodal attributes and topological structure, using combined reconstruction errors as anomaly scores. Generally, all methods discussed in this section are limited by their heavy dependence on the quality of ST data and falling short of exploiting visual information available in histology images to improve the accuracy of ATR detection.

\subsection{Multimodal Anomaly Detection}
By far, the development of multimodal AD methods has been predominantly focused on industrial AD scenarios involving the simultaneous use of 2D and 3D data. Recent methods in this field include M3DM \cite{wang2023multimodal} and AST \cite{rudolph2023asymmetric}. M3DM adopts a contrastive learning-based approach to fuse manifolds of segmented patches from 3D point clouds and RGB images, based on which a discriminative model is trained for anomaly decision. AST concatenates features extracted from RGB images and 3D depth maps, serving as inputs to asymmetric student and teacher networks that determine anomaly scores as per their output discrepancies. However, there is a significant gap in developing multimodal ATR detection methods that combine gene expression data and histology images.

\section{Determining anomaly score threshold}

Based on the observation that there is a gap between anomaly scores of inliers and true anomalies, as shown in \Cref{suppfig1}, we designed a two-component mixture model to automatically determine the anomaly score threshold that discriminate inliers and anomalies. Specifically, the distribution of anomaly scores is modeled as a univariate Gaussian Mixture Model (GMM) with two components corresponding to anomalous and normal instances, respectively. We specify the prior for anomaly abundance as a beta distribution and the priors for the mean and variance of the two Gaussian components as a Normal Inverse Chi-squared (NIX) distribution. The parameters of these priors are estimated based on inlier anomaly scores in the reference dataset. Utilizing the Maximum A Posteriori (MAP)-EM algorithm, we infer the parameters for both Gaussian components and then assign spots into either normal or anomalous groups based on their probabilities within each component. Specifically, let  $\Theta=\left\{\pi,\mu_k,\sigma_k^2,\forall k\in\left\{1,2\right\}\right\}$ represent the GMM parameters, where $\pi\in\left[0,1\right]$ represents the proportion of anomalies, and $\mu_k,\sigma_k^2$ represent the mean and variance for the $k$-th component, respectively, with the constraint that $\mu_1>\mu_2$. Then, the probability density function of an anomaly score $d_i$ can be formulated as:
\begin{equation}  P\left(d_i\middle|\Theta\right)=\pi\mathcal{N}\left(d_i\middle|\mu_1,\sigma_1^2\right)+(1-\pi)\mathcal{N}\left(d_i\middle|\mu_2,\sigma_2^2\right)
\end{equation}
\begin{equation}\pi \sim \mathrm{Beta}\left(\pi\middle|a,b\right)\end{equation}
\begin{equation}
\mu_k,\sigma_k^2\sim \mathrm{NIX}\left(\mu_k,\Sigma_k\middle|m_0,\kappa_0,s_0^2,\nu_0\right)
\end{equation}
Parameters for the priors in the GMM are empirically set based on the reference dataset’s anomaly scores $\delta_{i}, {\forall} i\in \{1,2,\cdots, N_{ref}\}$ :
\begin{small} \begin{equation}
m_0 = \frac{\sum_{i=1}^{N_{ref}}\delta_{i}}{N_{ref}},\  \kappa_0=0.01,\ \nu_0=3,\ s_0^2=\frac{\sum_{i=1}^{N_{ref}}\left(\delta_i-m_0\right)}{N_{ref}}
\end{equation} \end{small}
\begin{equation}
a=1,\ b=10
\end{equation}

The values of $a$ and $b$ can be adjusted if prior knowledge about anomaly abundance is available. The complete data log likelihood for the posterior, denoted as $\ell_c\left(\Theta\right)$, is expressed as:
\begin{equation} 
\begin{aligned}
 \ell_{c}\left(\Theta\right) &= \mathrm{log}P\left(\mathcal{D}\ \middle|\Theta\right)\\ 
 &= \sum_{i}\Big[\mathbb{I}\left(z_i=1\right)\left(\mathrm{log}\pi+\mathrm{log}\mathcal{N}\left(d_i\middle|\mu_1,\sigma_1^2\right)\right)\\ 
 &+ \mathbb{I}\left(z_i=2\right)\left(\mathrm{log}(1-\pi)+\mathrm{log}\mathcal{N}\left(d_i\middle|\mu_2,\sigma_2^2\right)\right)\Big]\\ &+\mathrm{log}\mathrm{Beta}\left(\pi\middle|a,b\right) \\ 
 &+ \sum_{k=1}^{2}{\mathrm{log}\mathrm{NIX}\left(\mu_k,\sigma_k^2\middle|m_0,\kappa_0,s_0^2,\nu_0\right)}
\end{aligned} 
\end{equation}

Here, $z_i$ denotes the component membership of spot $i$. In the $t$-th iteration of the E-step, the expected sufficient statistics ${\overline{z_i}}^{(t)}$ is derived from $\Theta^{(t-1)}$. In the subsequent M-step, $\Theta^{(t-1)}$ is updated to $\Theta^{(t)}$ by maximizing the auxiliary function $Q\left(\Theta,\Theta^{(t-1)}\right)=E\left({\ell}_{c} \left(\Theta\right)\big|\Theta^{(t-1)}\right)$. We elaborate our MAP-EM algorithm below:

\paragraph{MAP-EM inference of GMM parameters.}
We first list the mathematical notations used in the inference below:
\begin{table}[H]
    \centering
    \renewcommand{\arraystretch}{1.1}
    \fontsize{5.85pt}{7.5pt}\selectfont
    \resizebox{\linewidth}{!}{
    \begin{tabular}{ll} 
\Xhline{0.8pt}
       \textbf{Notation}   & \textbf{Description} \\ \Xhline{0.5pt}
        $\mathcal{D}\ =d_i, \forall i \in \{1,2,\cdots,N\}$ & Set of anomaly scores of target spots.\\
        $\Delta\ =\left\{\delta_i, \forall i \in \{1,2,\cdots,N_{ref}\} \right\}$ & Set of anomaly scores of reference spots.\\
        $N$ & Number of target spots.\\
        $N_{ref}$ & Number of reference spots.\\
        $\pi_1$ & Anomaly abundance among the target spots.\\
        $\pi_2=1-\pi_1$ & Abundance of normal spots among the target spots.\\
        $\Theta=\left\{\pi_k,\mu_k,\sigma_k^2,\forall k \in \left\{1,2\right\}\right\}$ & Parameters of the $k$-th GMM components.\\
        $z_i\in\left\{1,2\right\}$ & GMM component membership of the spot $i$.\\  
\Xhline{0.8pt}
    \end{tabular}
}
\caption{Overview of notations in MAP-EM inference.}
\end{table} 

Initially, we introduce a prior on $\pi_1$ as a Beta distribution, and a conjugate joint prior on $\mu_k$,$\sigma_k^2$ as a normal inverse chi-squared (NIX) distribution:
\begin{equation}
\pi_1 \sim \mathrm{Beta}\left(\pi\middle|a,b\right)
\end{equation}
\begin{equation}
\begin{aligned}
\mu_k,\sigma_k^2 &\sim \mathrm{NIX}\left(\mu_k,\sigma_k^2\middle|m_0,\kappa_0,s_0^2,\nu_0\right)\\ &=\mathcal{N}\left(\mu_k\middle|m_0,\sigma_k^2/\kappa_0\right)\chi^{-2}\left(\sigma_k^2\middle|s_0^2,\nu_0\right)
\end{aligned}
\end{equation}

Here, we set the parameters of the prior distributions based on the anomaly scores of spots in the reference dataset:
\begin{small} \begin{equation}
m_0=\frac{\sum_{i=1}^{N_{ref}}\delta_i}{N_{ref}}, \kappa_0=0.01, \nu_0=3, s_0^2=\frac{\sum_{i=1}^{N_{ref}}\left(\delta_i-m_0\right)}{N_{ref}}
\end{equation} \end{small}
\begin{equation}
a=1, b=10
\end{equation}

Note that the values of a\ and\ b can be set to more appropriate values if prior knowledge about the abundance of anomalies is available. The posterior complete data log likelihood can be written as:
\begin{equation} 
\begin{aligned}
\ell_c\left(\Theta\right) & = \mathrm{log}P\left(\mathcal{D}\ \middle|\Theta\right)\\ 
& = \sum_{i}\sum_{k} \mathbb{I} \left(z_i=k\right) \left(\mathrm{log}\pi_k+\mathrm{log}\mathcal{N}\left(d_i\middle|\mu_k,\sigma_k^2\right)\right) \\ &+\mathrm{log}\mathrm{Beta}\left(\pi\middle|a,b\right)\\ 
&+\sum_{k=1}^{2}{\mathrm{log}\mathrm{NIX}\left(\mu_k,\sigma_k^2\middle|m_0,\kappa_0,s_0^2,\nu_0\right)}
\end{aligned} 
\end{equation}

\textbf{E step}. In the $t$-th iteration, we have the auxiliary function $Q$ as:
\begin{equation}
\begin{aligned}
  & Q\left(\Theta, \Theta^{(t-1)}\right) = \mathbb{E}\left[\ell_c(\Theta)\big|\Theta^{(t-1)}\right] \\
  & = \sum_i\sum_{k=1}^2 P\left(z_i=k\big|d_i, \Theta^{(t-1)}\right) \\
  &\qquad\left[\mathrm{log}\pi_k^{(t-1)}+\mathbb{E}\left(\mathrm{log}N\left(d_i\big|\mu_k^{(t-1)},{(\sigma_k^2)}^{(t-1)}\right)\right)\right] \\
  &+ \mathrm{log}\mathrm{Beta}\left(\pi|a,b\right) + \sum_{k=1}^2\mathrm{log}\mathrm{NIX}\left(\mu_k, \sigma_k^2\big|m_0, \kappa_0, s_0^2, \nu_0\right)
\end{aligned}
\nonumber 
\end{equation}

The expected sufficient statistics (ESS) are:
\begin{equation}
\begin{aligned}
  \overline{z_{i,k}} &= P\left(z_i=k\big|d_i,\Theta^{(t-1)}\right) \\ 
  &=\frac{\pi_k^{(t-1)}\mathcal{N}\left(d_i\big|\mu_k^{(t-1)},(\sigma_k^2)^{(t-1)}\right)}{\sum_{k^\prime}\pi_{k^\prime}^{(t-1)}\mathcal{N}\left(d_i\big|\mu_{k^\prime}^{(t-1)},(\sigma_{k^\prime}^2)^{(t-1)}\right)}
\end{aligned}
\end{equation}

\textbf{M step}. In the $t$-th iteration, the expected complete posterior data log likelihood is:
\begin{equation} \begin{split}
& Q\left(\Theta,\Theta^{\left(t-1\right)}\right) \propto \\ & \sum_{k=1}^{2}\sum_{i}\left[{\overline{z_{i,k}}}^{\left(t\right)}\left(\mathrm{log}\pi_k-\frac{\mathrm{log}\left(\sigma_k^2\right)}{2}-\frac{\left(d_i-\mu_k\right)^2}{2\sigma_k^2}\right)\right]\\ 
&\quad+\mathrm{log}\mathrm{Beta}\left(\pi\middle|a,b\right) \\ &\quad+\sum_{k=1}^{2}\left[\mathrm{log}\mathcal{N}\left(\mu_k\middle|m_0,\sigma_k^2/\kappa_0\right)+\mathrm{log}\chi^{-2}\left(\sigma_k^2\middle|s_0^2,\nu_0\right)\right]
\end{split} \end{equation}

We maximize $Q\left(\Theta,\Theta^{\left(t-1\right)}\right)$ with respect to $\Theta$. The posterior distribution of $\pi_1\ $ and $\left\{\mu_k,\sigma_k^2\right\}$ are:
\begin{equation}
\pi_1 \sim \mathrm{Beta}\left(\pi\middle|a^{\left(t\right)},b^{\left(t\right)}\right)
\end{equation}
\begin{equation}
a^{\left(t\right)}=a+\sum_{i}{\overline{z_{i,1}}}^{\left(t\right)}
\end{equation}
\begin{equation}
b^{\left(t\right)}=b+\sum_{i}
{\overline{z_{i,2}}}^{\left(t\right)}
\end{equation}
\begin{equation}
\mu_k,\sigma_k^2 \sim \mathrm{NIX}\left(\mu_k,\sigma_k^2\middle|m_k^{\left(t\right)},\kappa_k^{\left(t\right)},\left(s_k^2\right)^{\left(t\right)},\nu_k^{\left(t\right)}\right)
\end{equation}
\begin{equation}
{\overline{z_k}}^{\left(t\right)}=\sum_{i}{\overline{z_{i,k}}}^{\left(t\right)}
\end{equation}
\begin{equation}
{\bar{d_k}}^{\left(t\right)}=\frac{\sum_{i}{({\overline{z_{i,k}}}^{\left(t\right)}}d_i)}{{\overline{z_k}}^{\left(t\right)}}
\end{equation}
\begin{equation}
\nu_k^{\left(t\right)}=\nu_0+{\overline{z_k}}^{\left(t\right)},\ \kappa_k^{\left(t\right)}=\kappa_0+{\overline{z_k}}^{\left(t\right)}
\end{equation}
\begin{equation}
m_k^{\left(t\right)}=\frac{{\overline{z_k}}^{\left(t\right)}{\bar{d_k}}^{\left(t\right)}+m_0\kappa_0}{\kappa_k^{\left(t\right)}}
\end{equation}
\begin{equation}
\left(s_k^2\right)^{\left(t\right)}={\nu_0s}_0^2+\sum_{i}{({\overline{z_{i,k}}}^{\left(t\right)}}d_i^2)+\kappa_0m_0^2-{\overline{z_k}}^{\left(t\right)}\left(m_k^{\left(t\right)}\right)^2
\nonumber 
\end{equation}
Then we have the MAP estimates of $\pi_1$, $\mu_k\ $ and $\sigma_k^2$ as $\pi_1^{(t)}$,$\mu_k^{\left(t\right)}$ and $\left(\sigma_k^2\right)^{\left(t\right)}$:
\begin{equation}
\pi_1^{\left(t\right)}=\frac{a^{\left(t\right)}-1}{a^{\left(t\right)}+b^{\left(t\right)}-2}
\end{equation}
\begin{equation}
\mu_k^{\left(t\right)}=m_k^{\left(t\right)}
\end{equation}
\begin{equation}
\left(\sigma_k^2\right)^{\left(t\right)}=\frac{{\nu_k^{\left(t\right)}\left(s_k^2\right)}^{\left(t\right)}}{\nu_k^{\left(t\right)}+3}
\end{equation}

Next, the EM algorithm continues to E step of the $\left(t+1\right)$-th iteration to update ${\overline{z_{i,k}}}^{(t+1)}$ ,$\forall i\in\left[1,N\right]$,$\forall k\in\left\{1,2\right\}$ until either convergence is achieved, or a pre-specified number of iterations is reached. Finally, the soft assignment of spot $i$ to the anomalous group $(\mathcal{Q}_{i,1})$ is calculated by plugin $\Theta$:
\begin{equation}
q_{i,1}=\pi_1\mathcal{N}\left(d_i\middle|\mu_1,\sigma_1^2\right)
\end{equation}
\begin{equation}
\mathcal{Q}_{i,1}=\frac{q_{i,1}}{\sum_{k} q_{i,k}},\forall i\in\{1,2,\cdots,N\},\forall k\in\left\{1,2\right\}
\end{equation}
If  $\mathcal{Q}_{i,1}>0.5$, then spot $i$ is determined as an anomaly. 

\section{Algorithm for \textit{MEATRD}}

\renewcommand{\algorithmicrequire}{\textbf{Input:}}
\renewcommand{\algorithmicensure}{\textbf{Output:}}
{
\begin{center}
  \begin{tabular}{p{\textwidth}}
    \begin{algorithm}[H]
      \caption{Stage II training.}
      \begin{algorithmic}[1]
        \Require
      Gene expression profiles $\bm{X} \in \mathbb{R}^{N\times G}$; Image patches $\bm{P} \in \mathbb{R}^{N\times h\times w\times c}$; Attributed graph $G(V,A,\mathcal{\bm{Z}})$; Number of nodes \textit{N}; Parameter of Image modality $\lambda$;
      Parameter of Gene modality $\alpha$.
    \vspace{0pt}
    \renewcommand{\algorithmicrequire}{\textbf{Definition:}}
    \Require
    Pre-trained Mobile-UNet encoder $E_1$; Pre-trained Mobile-Unet decoder $D_1$; Gene encoder $f_E$; Gene dncoder $f_D$; MGDAT network $\mathcal{F}$; ResNET-based image decoder $D_2$; GNN-based gene decoder $D_3$; Feed-forward network $f$; L1 reconstruction loss function $\mathcal{L}_{1}$; SSIM loss function $\mathcal{L}_{SSIM}$; SCE loss fustion $\mathcal{L}_{SCE}$.
    \vspace{0pt}
    \Ensure
    Reconstructed ST data of query spot $\hat{\mathbf{X}}_b$; Reconstructed histology image of query spot $\hat{\mathbf{P}}_b$.
    \vspace{0pt}
    
    \For {$\mathbf{X}_b$, $\mathbf{P}_b$ in $\bm{X}$, $\bm{P}$}
    \Comment{Processing \textit{Stage II}} 
        \State 
        $Z_{gene} = f_E(\mathbf{X}_b)$, $Z_{img} = E_1(\mathbf{P}_b)$.
        \vspace{0pt}
        \State 
        $Z_{gene}, Z_{img} = \mathcal{F}(Z_{gene}, Z_{img})$
        \vspace{0pt}
        \State 
        $\mathbf{P}_b = D_2(Z_{img})$ , $\mathbf{X}_b = D_3(Z_{gene})$ .
        \vspace{0pt}
        \State $\mathcal{L}_{rec}=\mathcal{L}_{ssim}(\mathbf{P}_b, \hat{\mathbf{P}}_b)+\lambda \mathcal{L}_1(\mathbf{P}_b, \hat{\mathbf{P}}_b)+\alpha \mathcal{L}_{SCE}(\mathbf{X}_b, \hat{\mathbf{X}}_b)$.
        \vspace{0pt}
        \State Update parameters of $E_1, f_E, \mathcal{F}, D_2, D_3$ using $\mathcal{L}_{rec}$.
        \EndFor 
    \vspace{0pt} \\
   
    \Return $\hat{\mathbf{P}}_b$, $\hat{\mathbf{X}}_b$
      \end{algorithmic}
    \end{algorithm}\\
  \end{tabular}
  \label{tab:training II}
\end{center}
}
\vspace{-1cm}

\renewcommand{\algorithmicrequire}{\textbf{Input:}}
\renewcommand{\algorithmicensure}{\textbf{Output:}}
{
\begin{center}
  \begin{tabular}{p{\textwidth}}
    \begin{algorithm}[H]
      \caption{Stage III Training.}
      \begin{algorithmic}[1]
        \Require  
      Gene expression profiles $\bm{X} \in \mathbb{R}^{N\times G}$; Image patches $\bm{P} \in \mathbb{R}^{N\times H\times W\times C}$; Maximum epochs $E_{max}$.
    \renewcommand{\algorithmicrequire}{\textbf{Definition:}}
    \Require
    Feed-forward network $f$; Image encoder in stage III $E_2$; Gene encoder in stage III $E_3$; Reconstruction error $\ell_{rec,b}$; Dimension of hyperspherical space $D$
    \Ensure
      SVDD Center $c\in \mathbb{R}^{N\times D}$. \\
    \vspace{0pt} 
    $\hat{\mathbf{P}}$, $\hat{\mathbf{X}}$ =  Learning of Spot Reconstruction ($\mathbf{P}$, $\mathbf{X}$) \Comment{Processing \textit{Stage III}} 
    \State Initialize $E_2$, $E_3$, and $f$. 
    \While{$epoch < E_{max}$}
        \State Compute the center $c$.
        \vspace{0pt}
        \For {$\mathbf{P}_b$, $\mathbf{X}_b$, $\hat{\mathbf{P}}_b$, $\hat{\mathbf{X}}_b$ in $\bm{P}$, $\bm{X}$, $\hat{\bm{P}}$, $\hat{\bm{X}}$}
            \State {\small $Z_{fused} = f(\beta norm(E_2(\mathbf{P}_b))+norm(E_3(\mathbf{X}_b))))$}.
            \vspace{0pt}
            \State {\small $\hat{Z}_{fused} = f(\beta norm(E_2(\hat{\mathbf{P}}_b))+norm(E_3(\hat{\mathbf{X}}_b))))$}.
            \vspace{0pt}
            \State $\ell_{rec,d} = Z_{fused} -\hat{Z}_{fused}$. 
            \vspace{0pt}
            \State$\mathcal{L}_{SVDD}=\left\| \ell_{rec,b} -c\right\|^2$.
            \State Update parameters of $E_2, E_3, f$ using $\mathcal{L}_{SVDD}$
        \EndFor
   \EndWhile \\
    \Return $c$
      \end{algorithmic}
    \end{algorithm} \\
  \end{tabular}
  \label{tab:training III}
\end{center}
}
\vspace{-1cm}

\section{Theoretical Analysis}
\subsection{Fused Bottleneck Encoding as a Minimally Sufficient Representation of Modality-Specific, Task-Relevant Information }
In this section, we begin with the mathematical notations (\Cref{tab:notation}), properties (\Cref{properties 1}), definitions (\Cref{Definition 1.1}), and assumptions (\Cref{Assumption 1.1} and \Cref{Assumption 1.2}) pertinent to our theoretical analysis. We then prove that the fused bottleneck encoding serves as a sufficient statistic \cite{tian2020makes} for capturing complementary task-relevant information across data modalities (\Cref{Proposition inclusiveness}), as illustrated in Supplementary Figure \ref{fig_com}. Finally, we prove that the fused bottleneck encoding is the most informationally compact among all sufficient encodings (\Cref{Proposition compactness}). 

\begin{table}[H]
    \centering
    
    \fontsize{7.8pt}{10pt}\selectfont
    \resizebox{\linewidth}{!}{
    \begin{tabular}{ll} 
\Xhline{1pt}
       \textbf{Notation}   & \textbf{Description} \\ \hline
       \Xhline{0.3pt}
        $v_i$&The view associated with the $i$-th data modality.\\
        $b_0$ &The biological contents shared between data modalities.\\ 
        $b_i$ &The biological contents specific to the $i$-th data modality.\\
        $I(*)$ &The information set inherent to *.\\
        $M$&The mutual information function.\\
        $H$&The entropy function.\\
        $f_1$ and $z_1$&The encoder and encoding for view $v_1$.\\
 $f_2$ and $z_2$&The encoder and encoding for view $v_2$.\\
 $f_3$ and $z_3$&The fusion bottleneck encoder and fused encoding.\\ 
\Xhline{1pt}
    \end{tabular}
}
\caption{Summary of notation.}\label{tab:notation}
\end{table}

\begin{definition}\label{Definition 1.1} 
Information function $I(x)$ denotes the information set inherent in $x$, e.g., $I(x)=H(x)$ when $x$ is a variable. Also, we have $I(v_1,v_2)=I(v_1)\cup\ I(v_2)$.
\end{definition}

\begin{definition}\label{Definition 1.2} 
The relative mutual information between two variables $v_1$ and $v_2$ is defined as the ratio of their mutual information to their total information:
\begin{equation}
    \widehat{M}(v_1,v_2)=\frac{M(v_1,v_2)}{I(v_1)\cup I(v_2)}=\frac{M(v_1,v_2)}{H(v_1)+H(v_2)-M(v_1,v_2)}
    \nonumber 
\end{equation}
Relative mutual information is more effective in highlighting the significance of shared information between two variables compared to conventional mutual information. 
\end{definition}

\begin{properties}\label{properties 1}
Properties of Mutual Information and Entropy:
\begin{flalign*}
\textbf{i}) \ &M(x;y)\geq 0, M(x;y|z)\geq 0. &
\end{flalign*}
\begin{flalign*}
\textbf{ii}) \ &M(x;y,z) = M(x;y)+M(x;z|y). &
\end{flalign*}
\begin{flalign*}
\textbf{iii}) \ M(x_1;x_2;\cdots;x_{n+1}) &= M(x_1;\cdots;x_n)\\ &-M(x1;\cdots;x_n|x_{n+1}). &
\end{flalign*}
\begin{flalign*}
\textbf{iv})\ \text{If} \ &I(v_2)\subseteq I(v_1)\longrightarrow M(v_1,v_2)=H(v_2), \\ &I(v_1,v_2)= I(v_1)\cup I(v_2)=I(v_1)=H(v_1) &
\end{flalign*}
\begin{flalign*}
\textbf{v}) \ \text{If}\ &I(v_2)\cap I(v_1)=\varnothing \longrightarrow \\ &I(v_1,v_2)=H(v_1,v_2)=H(v_1)+H(v_2)=I(v_1)+ I(v_2) &
\end{flalign*}
\end{properties}

\begin{proof}
    The proofs of properties \textbf{i, ii,} and \textbf{iii} can be found in \cite{cover1999elements}. For property \textbf{iv}:
    \begin{equation}
        \begin{split}
            M(v_1,v_2)&=\underset{v_1,v_2}{\iint }p(v_1,v_2)\mathrm{log}(\frac{p(v_1,v_2)}{p(v_1)p(v_2)})\\ &=\underset{v_1,v_2}\iint p(v_1,v_2)\mathrm{log} (\frac{\overbrace{p(v_2|v_1)}^{=1\ \text{as}\ I(v_2)\subseteq I(v_1)}p(v_1)}{p(v_1)p(v_2)})\\
&=\underset{v_2}{\int }-p(v_2)\mathrm{log}(p(v_2))=H(v_2).
        \end{split}
    \end{equation}\par
In addition, for $I(v_1,v_2)$, we have:
\begin{equation}
    \begin{split}
            I(v_1,v_2)&=I(v_1)\cup I(v_2)=H(v_1,v_2)\\&=\underset{v_1,v_2}{\iint }-p(v_1,v_2)\mathrm{log}( p(v_1,v_2))\\ 
            &=\underset{v_1,v_2}\iint -p(v_1,v_2)\mathrm{log}(p(v_2|v_1)p(v_1))\\
            &=\underset{v_1}\int -p(v_1)\mathrm{log}(p(v_1))=H(v_1)=(v_1).   
    \end{split}
\end{equation}\par
    For property \textbf{v}, we first clarified that:
    \begin{equation}
         I(v_2)\cap I(v_1)=\varnothing  \longrightarrow p(v_1,v_2)=p(v_1)p(v_2)
    \end{equation}\par
    Therefore, we have:
    \begin{equation}
        \begin{split}
            H(v_1,v_2)&=\underset{v_1,v_2}{\iint }-p(v_1,v_2)\mathrm{log} (p(v_1,v_2))\\ 
            &=\underset{v_1,v_2}{\iint }-p(v_1)p(v_2)\mathrm{log} (p(v_1)p(v_2))\\
            &=\underset{v_1}{\int }-p(v_1)\mathrm{log} (p(v_1))+\underset{v_2}{\int }-p(v_2)\mathrm{log}(p(v_2))\\
            &=H(v_1)+H(v_2)
        \end{split}
    \end{equation}
\end{proof}
\begin{assumption}\label{Assumption 1.1} 
Assume that histology image and ST represent two views ($v_1$ and $v_2$) of the biological information ($b$) inherent in the studied tissue. Let $y$ be an indicator of the normality of regions across the tissue, which is essentially determined by $b$. Then, we have:
 \begin{equation}
 \begin{split}
      &I(y)=\{b\}=\{b_0,b_1,b_2\},\\ 
      &\{b_0\}\cap\{b_1\}=\varnothing,\{b_0\}\cap\{b_2\}=\varnothing, \{b_1\}\cap\{b_2\}=\varnothing,\\ 
      &M(y;v_1)=\{b\}\cap I(v_1)= \{b_0,b_1\},\\
      &M(y;v_2)=\{b\}\cap I(v_2)= \{b_0,b_2\}.\\
 \end{split}
 \nonumber 
 \end{equation}
 
 Here, $b_0$ represents the common task-relevant information, while $b_1$ and $b_2$ represent the task-relevant information specific to $v_1$ and $v_2$, respectively. 
\end{assumption} 

\begin{assumption}\label{Assumption 1.2} 
The encodings $z_1=f_1(v_1)$, $z_2=f_2(v_2)$, and $ z_3=f_3(z_1,z_2)$ are generated by the respective encoders. We define $z_4=\{z_1,z_3\}$ and $z_5=\{z_2,z_3\}$ as per equation (6) in the main text. Assuming $f_1$ and $f_2$ are information lossless encoders, and, along with the fusion bottleneck encoder $f_3$, follow the information bottleneck theory proposed by Tishby et al., \cite{tishby2000information}. That is, $z_4\ \text{and}\ z_5$ should be maximally informative about $y$ with an information constrain on the bottleneck $z_3$. We use relative mutual information in place of conventional mutual information for more accurate reflection of the significance of shared information. The optimization problems are defined as:
\begin{equation}
\max_{f_1,f_3}\ \widehat{M}(z_4;y|f_1)\quad \mathrm{s.t.}\ \widehat{M}(z_3;v_1|f_3)\le I_c,
\nonumber 
\end{equation}
\begin{equation}
\max_{f_2,f_3}\ \widehat{M}(z_5;y|f_2)\quad \mathrm{s.t.}\ \widehat{M}(z_3;v_2|f_3)\le I_c,
\nonumber 
\end{equation}\par
where $I_c$ is the information constraint. These can be converted into the following objective functions by introducing a Lagrange multiplier $\beta>0$:
\begin{equation}
    \min_{z_3,z_4} \ell(z_3,z_4)=\min_{z_3,z_4}-\widehat{M}(z_4;y)+\beta \widehat{M}(z_3;v_1),
    \nonumber 
\end{equation}
\begin{equation}
    \min_{z_3,z_5} \ell(z_3,z_5)=\min_{z_3,z_5}-\widehat{M}(z_5;y)+\beta \widehat{M}(z_3;v_2).
    \nonumber 
\end{equation}
\end{assumption}

\begin{figure}[t]
  \centering
    \includegraphics[width=0.4\textwidth]{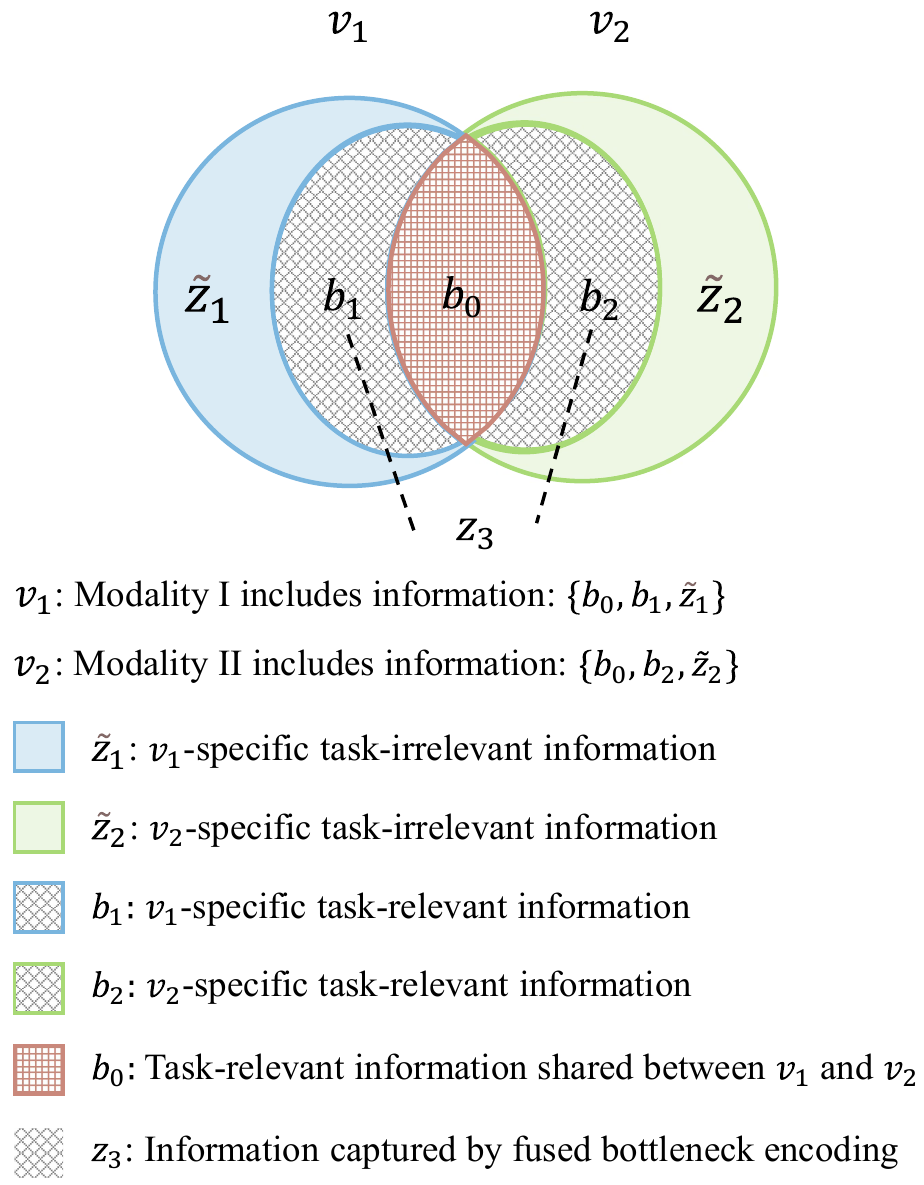}
  \caption{Information diagrams of the two data modalities $v_1$ and $v_2$. 
 The bottleneck encoding,  generated by the MGDAT block, embodies the minimally sufficient representation for modality-specific, task-relevant information (i.e., $b_1+b_2$).}
  \label{fig_com}
\end{figure}

\begin{proposition}\label{Proposition inclusiveness}  
\textbf{Inclusiveness of complementary task-relevant information.} The objective functions in \Cref{Assumption 1.2} are optimized when the bottleneck encoding $z_3$ encompasses all task-relevant information specific to $v_1$ and $v_2$:
\begin{equation}
I(z_3)\supseteq\{b_1,b_2\}
\nonumber 
\end{equation} 
\end{proposition}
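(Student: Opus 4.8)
The plan is to work directly with the two Lagrangian objectives in \Cref{Assumption 1.2} and show that each is simultaneously minimized exactly when $I(z_3)\supseteq\{b_1,b_2\}$. First I would observe that since $f_1,f_2$ are information-lossless, $I(z_1)=I(v_1)$ and $I(z_2)=I(v_2)$, so $z_4=\{z_1,z_3\}$ carries $I(v_1)\cup I(z_3)$ and similarly $z_5$ carries $I(v_2)\cup I(z_3)$. Using \Cref{Assumption 1.1}, $M(z_4;y)$ is then the portion of $\{b_0,b_1,b_2\}$ contained in $I(v_1)\cup I(z_3)$; since $\{b_0,b_1\}\subseteq I(v_1)$ already, the only way to enlarge $M(z_4;y)$ is for $I(z_3)$ to capture $b_2$. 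Symmetrically, enlarging $M(z_5;y)$ requires $I(z_3)$ to capture $b_1$. So the numerators of the two relative-mutual-information terms $\widehat M(z_4;y)$ and $\widehat M(z_5;y)$ are each maximized (and the denominators, which only involve $I(v_1)\cup I(z_3)$ resp.\ $I(v_2)\cup I(z_3)$, I would check behave monotonically in the right direction, or argue the maximum of the ratio is attained there) precisely when $\{b_1,b_2\}\subseteq I(z_3)$.

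The second half of the argument is to check that adding $\{b_1,b_2\}$ to $I(z_3)$ does not incur a prohibitive penalty through the constraint terms $\beta\widehat M(z_3;v_1)$ and $\beta\widehat M(z_3;v_2)$, i.e.\ that no strictly better trade-off exists with a smaller $z_3$. Here I would argue that $b_1\subseteq I(v_1)$ and $b_2\subseteq I(v_2)$, so carrying $\{b_1,b_2\}$ contributes to $M(z_3;v_1)$ and $M(z_3;v_2)$ only the amount that is unavoidably needed to realize the gain in $\widehat M(z_4;y)$ and $\widehat M(z_5;y)$ — any encoding achieving the maximal $\widehat M(z_4;y)$ must contain $b_2$ and hence pay at least the $b_2$-part of $\widehat M(z_3;v_1)$, so the configuration $I(z_3)\supseteq\{b_1,b_2\}$ is Pareto-dominant for the joint objective. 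I would phrase this by comparing any candidate $z_3'$ with $I(z_3')\not\supseteq\{b_1,b_2\}$ against $z_3$ and showing $\ell(z_3,z_4)+\ell(z_3,z_5)$ does not increase, using Properties \textbf{iv} and \textbf{v} to evaluate the set-theoretic unions and intersections appearing in the $\widehat M$ expressions.

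The main obstacle I anticipate is handling the \emph{denominators} of the relative mutual information cleanly. Because $\widehat M(v_1,v_2)=M/(H(v_1)+H(v_2)-M)$ is a ratio, maximizing the numerator and the whole ratio need not coincide in general; I would need either an extra monotonicity observation (the denominator $I(v_1)\cup I(z_3)$ grows, but the numerator grows by exactly the newly-included task-relevant bits while irrelevant bits in $z_3$ can be excluded by the bottleneck constraint) or an argument that at the optimum $z_3$ contains \emph{only} task-relevant information so that numerator and denominator move together. I would also need to be careful that the three objectives (two $\widehat M(\cdot;y)$ maximizations plus the two $\widehat M(z_3;v_i)$ minimizations) are genuinely compatible, which is where the disjointness $\{b_0\}\cap\{b_1\}=\{b_1\}\cap\{b_2\}=\varnothing$ from \Cref{Assumption 1.1} does the real work: it lets me decompose every information set into independent additive pieces and optimize piecewise. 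Everything else is bookkeeping with Properties \textbf{i}--\textbf{v}.
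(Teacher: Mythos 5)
Your plan follows essentially the same route as the paper's proof: both work directly with the two Lagrangian objectives, use the losslessness of $f_1,f_2$ to identify $I(z_4)=I(v_1)\cup I(z_3)$, and exploit the disjointness in Assumption D.1 to conclude that the only way to grow $\widehat M(z_4;y)$ is for $z_3$ to absorb $b_2$ (and symmetrically $b_1$ via $\ell(z_3,z_5)$). The denominator obstacle you flag is resolved in the paper by precisely the two disjointness facts you already state, applied separately to the two ratios: writing $\check z_3$ for the part of $b_2$ missing from $I(z_3)$, one has $\check z_3\subset I(y)$, so adjoining it leaves the reward's denominator $I(z_1,z_3,\check z_3)\cup I(y)=I(z_1,z_3)\cup I(y)$ unchanged while the numerator gains $M(y;\check z_3\mid z_1,z_3)>0$; and $\check z_3\cap I(v_1)=\varnothing$, so the penalty's numerator is unchanged ($M(v_1;\check z_3\mid z_3)=0$) while its denominator $I(v_1)\cup I(z_3,\check z_3)$ strictly grows, so the penalty term actually \emph{decreases}. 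Hence no Pareto or trade-off argument is needed for this direction---both terms of $\ell(z_3,z_4)$ move favorably and the update is repeated until $\check z_3=\varnothing$; the question of whether $z_3$ should carry anything beyond $\{b_1,b_2\}$ (where your worry about paying a penalty is legitimate) is deferred to the separate compactness result, Proposition D.2.
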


\begin{proof}
Given that $f_1$ and $f_2$ are information lossless encoders of $v_1$ and $v_2$, we have:
\begin{equation}
    I(v_1)= I(z_1)= \{b_0,b_1,\tilde{z}_1\}
\end{equation}
\begin{equation}
    I(v_2)= I(z_2)= \{b_0,b_2,\tilde{z}_2\}
\end{equation}\par
 
 Here, $\tilde{z}_1$ and $\tilde{z}_2$ represents task-irrelevant information specific to $v_1$ and $v_2$, respectively. $b_0, \tilde{z}_1\ \text{and}\  \tilde{z}_2$ are mutually exclusive, i.e.,

\begin{equation}
\begin{aligned}
        &\{b_i\}\cap \{\tilde{z}_1\}=\varnothing, \{b_i\}\cap \{\tilde{z}_2\}=\varnothing,\\
        &\{\tilde{z}_1\}\cap \{\tilde{z}_2\}=\varnothing,\{b_i\}\cap \{b_j\}=\varnothing,\\  
        &\forall i,j\in \{0,1,2\}, i\neq j 
\end{aligned}
\end{equation}

Let $\{\check{z}_3\}=(\{b_0,b_1,b_2\}/(\{b_0,b_1,b_2\}\cap I(z_3)))\cap \{b_2\}$ represent the task-relevant information in $b_2$ that is not included in $z_3$. It is obvious:
\begin{equation}
    \begin{aligned}
        &\{\check{z}_3\}\subset I(y),\{\check{z}_3\} \cap I(z_3)=\varnothing,\\ 
        &\{\check{z}_3\}\cap \{b_0\}= \varnothing,\{\check{z}_3\}\cap \{b_1\}= \varnothing,\\ 
        &\{\check{z}_3\}\cap I(v_1)=\varnothing, \{\check{z}_3\}\cap I(z_1)=\varnothing. 
    \end{aligned}
\end{equation}
 
If $\{\check{z}_3 \}\neq \varnothing$, we have:
\begin{equation}
\begin{split}
   &\ell(z_3,z_4) = -\widehat{M}(z_4;y)+\beta\widehat{M}(z_3;v_1)\\ 
   &=-\widehat{M}(z_1,z_3;y)+\beta \widehat{M}(v_1;z_3)\\
   &=-\widehat{M}(z_1,z_3;y)+\beta (\frac{M(v_1;z_3)}{I(v_1)\cup I(z_3)}+\frac{\overbrace{M(v_1;\check{z}_3|z_3)}^{=0}}{I(v_1)\cup I(z_3)})\\
   &>-\widehat{M}(z_1,z_3;y)+\beta \frac{M(z_3,\check{z}_3;v_1)}{I(v_1)\cup \underbrace{I(z_3,\check{z}_3)}_{=H(z_3)+H(\check{z}_3)>H(z_3)=I(z_3)}}\\
   &=-\widehat{M}(z_1,z_3;y)+\beta \widehat{M}(z_3,\check{z}_3;v_1)\\
   \end{split}
   \nonumber 
\end{equation}\par

For $\widehat{M}(z_1,z_3;y)$, we have:
\begin{equation}
    \begin{split}
\widehat{M}(z_1,z_3;y) 
&=\frac{M(z_1,z_3;y)}{I(z_1,z_3) \cup I(y)}\\
&=\frac{M(z_1,z_3;y)}{I(z_1,z_3) \cup\underbrace{( I(y)\cup I(\check{z}_3))}_{=H(y)=I(y)\ \text{as}\ \{\check{z}_3\}\subset I(y)}}\\
&<\frac{\overbrace{M(y;\check{z}_3|z_1,z_3)}^{>0}+M(y;z_1,z_3)}{I(z_1,z_3,\check{z}_3)\cup I(y)}\\
&=\frac{M(y;z_1,z_3,\check{z}_3)}{I(z_1,z_3,\check{z}_3) \cup I(y)} =\widehat{M}(z_1,z_3, \check{z}_3;y)\\
\end{split}
\nonumber 
\end{equation}\par
Thus, $f_3$ will be updated to generate $z'_{3}\ \text{with}\ I(z'_3)=\{I(z_3),\check{z}_3\}$ so that:
\begin{equation}
\begin{split}
   \ell(z'_3,z_4) 
   &=-\widehat{M}(z_1,z'_3;y)+\beta \widehat{M}(z'_3;v_1)\\
   &=-\widehat{M}(z_1,z_3, \check{z}_3;y)+\beta \widehat{M}(z_3,\check{z}_3;v_1)\\
   &<\ell(z_3,z_4) 
\end{split}
\end{equation}
This update continues until $\{\check{z}_3\}=\varnothing\rightarrow\{b_2\}\subseteq I(z_3)$. Similarly, using $\ell(z_3,z_5)$, we can show that $\{\hat{z}_3\}= (\{b_0,b_1,b_2\}/(\{b_0,b_1,b_2\}\cap I(z_3))\cap \{b_1\}=\varnothing \rightarrow \{b_1\}\subseteq I(z_3)$. Therefore, $I(z_3)\supseteq\{b_1,b_2\}$, completing the proof.
\end{proof}

\begin{proposition}\label{Proposition compactness}  
\textbf{Compactness of complementary task-relevant information.} The objective functions in \Cref{Assumption 1.2} is minimized when:
\begin{equation}
I(z_3)=\{b_1,b_2\}
\nonumber 
\end{equation} 
\end{proposition}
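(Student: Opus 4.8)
The plan is to complement Proposition~\ref{Proposition inclusiveness}, which establishes the lower bound $I(z_3)\supseteq\{b_1,b_2\}$, with a matching upper bound argument. Since the objective $\ell(z_3,z_4)=-\widehat{M}(z_4;y)+\beta\widehat{M}(z_3;v_1)$ (and symmetrically $\ell(z_3,z_5)$) contains a penalty term $\beta\widehat{M}(z_3;v_1)$ that is monotone increasing in the information content of $z_3$ relevant to $v_1$, any information in $z_3$ beyond what is forced by the reconstruction/prediction term can only increase the loss. So the first step is to assume, for contradiction, that at an optimum $I(z_3)\supsetneq\{b_1,b_2\}$, i.e. $z_3$ carries some extra chunk $\{\breve z\}\neq\varnothing$ with $\{\breve z\}\cap\{b_1,b_2\}=\varnothing$. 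By \Cref{Assumption 1.1} and the decomposition in the proof of Proposition~\ref{Proposition inclusiveness}, this extra information must be drawn from $\{b_0\}$, $\{\tilde z_1\}$, or $\{\tilde z_2\}$ (the only other information pools in $I(v_1)\cup I(v_2)$), all of which are either task-irrelevant or already present in $z_1$ (and hence in $z_4=\{z_1,z_3\}$).

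The second step is to show that removing $\{\breve z\}$ from $z_3$ — i.e. replacing $f_3$ by an encoder producing $z_3'$ with $I(z_3')=I(z_3)\setminus\{\breve z\}$ — strictly decreases $\ell(z_3,z_4)$. For the numerator of the first term, $\widehat{M}(z_4;y)=\widehat{M}(z_1,z_3;y)$: since $z_4$ still contains $z_1$, and $\{b_0,b_1\}\subseteq I(z_1)$ while $\{b_2\}\subseteq I(z_3')$ is preserved, we have $M(z_1,z_3';y)=M(z_1,z_3;y)=H(y)$, so the mutual information in the numerator is unchanged; meanwhile the denominator $I(z_1,z_3)\cup I(y)$ can only shrink (or stay equal) when passing to $z_3'$, so $\widehat{M}(z_1,z_3';y)\ge\widehat{M}(z_1,z_3;y)$, i.e. the $-\widehat{M}(z_4;y)$ term does not increase. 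For the penalty term, $\widehat{M}(z_3';v_1)$ versus $\widehat{M}(z_3;v_1)$: here I would use the relative mutual information formula $\widehat M(z_3;v_1)=M(z_3;v_1)/(I(z_3)\cup I(v_1))$ together with Properties~\ref{properties 1}(iv)--(v) to argue that dropping $\{\breve z\}$ strictly decreases this ratio — either $\{\breve z\}$ overlaps $I(v_1)$ (so its removal lowers the numerator relative to the denominator) or it is disjoint from $I(v_1)$ (so by Property~(v) it adds $H(\breve z)$ purely to the denominator via $I(z_3)$, inflating the ratio). In both sub-cases $\widehat{M}(z_3';v_1)<\widehat{M}(z_3;v_1)$ unless $\{\breve z\}=\varnothing$. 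Combining, $\ell(z_3',z_4)<\ell(z_3,z_4)$, contradicting optimality.

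The symmetric argument with $\ell(z_3,z_5)$, $v_2$, and $z_5=\{z_2,z_3\}$ handles any residual extra information that happens to lie in $I(v_2)$ but not $I(v_1)$, so that jointly the two objectives force $\{\breve z\}=\varnothing$ and hence $I(z_3)=\{b_1,b_2\}$. Finally I would note that this value is actually attainable — the lower bound from Proposition~\ref{Proposition inclusiveness} is met with equality — so $I(z_3)=\{b_1,b_2\}$ is indeed the minimizer, completing the proof.

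I expect the main obstacle to be the penalty-term inequality: one must be careful that removing $\{\breve z\}$ genuinely lowers the \emph{relative} mutual information $\widehat M(z_3;v_1)$ and not merely the ordinary $M(z_3;v_1)$. The subtlety is that if $\{\breve z\}$ is correlated with $v_1$, both numerator and denominator of the ratio move, so the monotonicity claim needs the case split on whether $\{\breve z\}\subseteq I(v_1)$ or $\{\breve z\}\cap I(v_1)=\varnothing$ (or a mixture thereof, handled by splitting $\{\breve z\}$ further), each case invoking the appropriate part of Properties~\ref{properties 1}. A secondary delicate point is ensuring the first ($-\widehat M$) term does not \emph{decrease} enough to offset the penalty gain — this is why it matters that $z_1$ (resp. $z_2$) already carries $b_0$, so no task-relevant information is lost from $z_4$ (resp. $z_5$) when $z_3$ is trimmed.
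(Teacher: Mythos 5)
Your overall strategy is essentially the paper's: starting from $I(z_3)\supseteq\{b_1,b_2\}$ (\Cref{Proposition inclusiveness}), you argue that any additional chunk in $z_3$ can be trimmed without harming the prediction term (because $b_0\subseteq I(z_1)$ and $b_2$ is retained, so $M(z_4;y)$ stays at $H(y)$) while the penalty term strictly drops, invoking the symmetric objective for $v_2$-side information. The paper merely organizes this as two separate moves (first ruling out anything outside $\{b_0,b_1,b_2\}$, then shrinking away $b_0$), whereas you unify them into a single removal argument; the mechanism is the same.

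However, one sub-case of your penalty-term inequality is wrong as stated, and it is precisely the delicate point you flag. When $\{\breve z\}\cap I(v_1)=\varnothing$ (so necessarily $\{\breve z\}\subseteq\{\tilde z_2\}$, the only remaining pool once $b_2$ is mandatory and $I(z_3)\subseteq I(v_1)\cup I(v_2)$), keeping $\breve z$ inside $z_3$ leaves the numerator $M(z_3;v_1)$ unchanged but enlarges the denominator $I(z_3)\cup I(v_1)$ by $H(\breve z)$; this \emph{deflates} the ratio, so $\widehat M(z_3;v_1)<\widehat M(z_3';v_1)$ --- the opposite of your claimed direction. For such a chunk the $v_1$-penalty actually rewards inclusion, and whether $\ell(\cdot,z_4)$ alone still decreases upon removal is then $\beta$-dependent (the prediction term does move favourably, since $I(z_1,z_3)\cup I(y)$ also shrinks, but the net sign is not pinned down). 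Your proof survives only because of your closing remark: $\{\tilde z_2\}\subseteq I(v_2)$ falls into the ``overlapping'' case of the symmetric objective $\ell(\cdot,z_5)$, which is exactly how the paper proceeds --- each objective is only ever used to add or remove information drawn from its own view. To make the argument sound, drop the disjoint sub-case from the $\ell(\cdot,z_4)$ analysis entirely and route all $v_2$-specific surplus through $\ell(\cdot,z_5)$ from the outset.
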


\begin{proof} 
As proved in \Cref{Proposition inclusiveness}, $I(z_3)\supseteq\{b_1,b_2\}$. We start with $I(z_3)=\{b_0,b_1,b_2\}$, and then expand $z_3$ to encompass additional information from $v_1$, denoted as $\{\check{z}_3\}$, where: 

\begin{equation}
    \begin{aligned}
        &\{\check{z}_3\}\subset I(v_1)=I(z_1), M(\check{z}_3,v_1)>0,\\ 
        &M(\check{z}_3,y)=0, \{z_3\}\cap \{\check{z}_3\}=\varnothing.
    \end{aligned}
\end{equation}

Let $\ddot{z}_3=\{z_3,\check{z}_3\}$. The objective function becomes:

\begin{equation}
\begin{split}
   \ell(\ddot{z}_3,z_4) &= -\widehat{M}(z_4;y)+\beta\widehat{M}(\ddot{z}_3;v_1)\\ 
   &=-\widehat{M}(z_1,z_3,\check{z}_3;y)+\beta \widehat{M}(\check{z}_3,z_3;v_1)\\
   \end{split}
   \end{equation}\par
For $\widehat{M}(\check{z}_3,z_3;v_1)$, we have:
\begin{equation}
    \begin{split}
        \widehat{M}(\check{z}_3,z_3;v_1)&=\frac{M(v_1;z_3)+\overbrace{M(v_1;\check{z}_3|z_3)}^{>0}}{\underbrace{I(v_1)\cup I(z_3)}_{\because I(z_3,\check{z}_3)=I(z_3)\cup I(\check{z}_3);\  I(\check{z}_3)\cup I(v_1)=I(v_1)} }\\ 
        &>\frac{M(z_3;v_1)}{I(v_1)\cup I(z_3) }=\widehat{M}(z_3;v_1)
    \end{split}
\end{equation} \par
For $\widehat{M}(z_1,z_3,\check{z}_3;y)$, we have:
\begin{equation}
    \begin{split}
       \widehat{M}(z_1,z_3,\check{z}_3;y)&=\frac{\overbrace{M(y;\check{z}_3|z_1,z_3)}^{=0}+M(y;z_1,z_3)}{I(z_1,z_3,\check{z}_3)\cup I(y) }\\
        &=\frac{M(y;z_1,z_3)}{\underbrace{I(z_1,z_3)}_{\because \{\check{z}_3\}\subset I(z_1)}\cup I(y) }=\widehat{M}(z_1,z_3;y)
    \end{split}
\end{equation}
Thus, $\ell(\ddot{z}_3,z_4)>-\widehat{M}(z_1,z_3;y)+\beta \widehat{M}(z_3;v_1)=\ell(z_3,z_4), \forall \beta>0$. To minimize the objective function, $\check{z}_3 $ is excluded. Similarly, from $\ell(z_3,z_5)$, we know $z_3$ should not expand to encompass additional information from $v_2$. Hence, optimal $z_3$ must satisfy $I(z_3)\subseteq\{b_0,b_1,b_2\}$.

Furthermore, if we shrink the information of $z_3$ to $\dot{z}_3$, with the reduced information $\{\hat{z}_3\}\subseteq\{b_0\}\subset I(z_1)=I(v_1)\rightarrow \{\dot{z}_3\}\cap\{\hat{z}_3\}=\varnothing$. The objective function becomes:
\begin{equation}
\begin{split}
   &\ell(z_3,z_4) = -\widehat{M}(z_4;y)+\beta\widehat{M}(z_3;v_1)\\ 
   &=-\widehat{M}(z_1,\dot{z}_3,\hat{z}_3;y)+\beta \widehat{M}(\dot{z}_3,\hat{z}_3;v_1)\\
   &=-\frac{M(y;z_1,\dot{z}_3)+\overbrace{M(y;\hat{z}_3|z_1,\dot{z}_3)}^{=0\ as\ \hat{z}_3\subset I(z_1)}}{I(z_1,\dot{z}_3,\hat{z}_3)\cup I(y)}\\ 
   &+\beta \frac{M(\dot{z}_3,\hat{z}_3;v_1)}{(I(\dot{z}_3)\cup I(\hat{z}_3))\cup I(v_1)} \\
   &=-\frac{M(z_1,\dot{z}_3;y)}{\underbrace{I(z_1,\dot{z}_3)}_{\because \{\hat{z}_3\}\subset I(z_1)}\cup I(y)}+\beta \frac{M(v_1;\dot{z}_3)+\overbrace{M(v_1;\hat{z}_3|\dot{z}_3)}^{>0}}{I(\dot{z}_3)\cup I(v_1)}\\
   &>-\frac{M(z_1,\dot{z}_3;y)}{I(z_1,\dot{z}_3)\cup I(y)}+\beta \frac{M(\dot{z}_3;v_1)}{I(\dot{z}_3)\cup I(v_1)}=\ell(\dot{z}_3,z_4)
   \end{split}
   \nonumber 
   \end{equation}
Therefore, if $M(v_1;\hat{z}_3)> 0$, the objective function can be further optimized by reducing information from $\{b_0\}$ until $I(z_3)\cap\{b_0\}=\varnothing\rightarrow I(z_3)=\{b_1,b_2\}$. This completes the proof.

\end{proof}
In summary, $f_3$ effectively captures the view-specific, task-relevant information in the bottleneck encoding $z_3$, which embodies an inclusive and condensed representation of the complementary information, biologically relevant for determining tissue region normality, between the two data modalities. Thus, $z_3$ serves as an informational bridge connecting the two data modalities.

\begin{table*}[t]
\centering
    \fontsize{7.8pt}{10.5pt}\selectfont
    \renewcommand{\arraystretch}{1.25}
    \setlength{\tabcolsep}{3.55pt} 

\label{table:data}
\resizebox{\textwidth}{!}{
\begin{tabular}{l|l|l|l}
\Xhline{1pt}
Dataset & Tissue (ATR Type)& Total Number of Spots& Anomaly Proportion\\ \Xhline{0.7pt}
10x-hNB-v03 & Normal human breast & 2364 & 0.00\%  \\ \Xhline{0.5pt}
10x-hNB-v04 & Normal human breast & 2504 & 0.00\%  \\ \Xhline{0.5pt}
10x-hNB-v05 & Normal human breast & 2224 & 0.00\%  \\ \Xhline{0.5pt}
10x-hNB-v06 & Normal human breast & 3037 & 0.00\%  \\ \Xhline{0.5pt}
10x-hNB-v07 & Normal human breast & 2086 & 0.00\%  \\ \Xhline{0.5pt}
10x-hNB-v08 & Normal human breast & 2801 & 0.00\%  \\ \Xhline{0.5pt}
10x-hNB-v09 & Normal human breast & 2694 & 0.00\%  \\ \Xhline{0.5pt}
10x-hNB-v10 & Normal human breast & 2473& 0.00\%  \\ \Xhline{0.5pt}
10x-hBC-A1 & Human breast cancer (Cancer in situ, Invasive cancer) & 346 & 12.43\% \\ \Xhline{0.5pt}
10x-hBC-B1 & Human breast cancer (Invasive cancer) & 295 & 78.64\% \\ \Xhline{0.5pt}
10x-hBC-C1 & Human breast cancer (Invasive cancer) & 176 & 27.84\%  \\ \Xhline{0.5pt}
10x-hBC-D1 & Human breast cancer (Invasive cancer) & 306 & 54.58\%  \\ \Xhline{0.5pt}
10x-hBC-E1 & Human breast cancer (Invasive cancer) & 587 & 42.08\% \\ \Xhline{0.5pt}
10x-hBC-F1 & Human breast cancer (Invasive cancer) & 691 & 16.50\%  \\ \Xhline{0.5pt}
10x-hBC-G2 & Human breast cancer (Cancer in situ, Invasive cancer) & 467 & 65.74\%  \\ \Xhline{0.5pt}
10x-hBC-H1 & Human breast cancer (Cancer in situ, Invasive cancer) & 613 & 69.49\%  \\ \Xhline{0.5pt}
10x-hBC-I1 & Human breast cancer (Ductal carcinoma in situ, Lobular carcinoma in situ, Invasive Carcinoma) & 1308 & 62.92\% \\ \Xhline{0.5pt}
10x-hLiver-A1 &Healthy human liver &2378 &0.00\% \\ \Xhline{0.5pt}
10x-hLiver-B1 &Healthy human liver &2349 &0.00\% \\ \Xhline{0.5pt}
10x-hLiver-C1 &Healthy human liver &2277 &0.00\% \\ \Xhline{0.5pt}
10x-hLiver-D1 &Healthy human liver &2265 &0.00\% \\ \Xhline{0.5pt}
10x-PSC-A1 &PSC human liver (native cell, intrahepatic cholangiocyte) &3118 &26.36\% \\ \Xhline{0.5pt}
10x-PSC-B1 &PSC human liver (PSC fibrotic region) &2670 &24.91\% \\ \Xhline{0.5pt}
10x-PSC-C1 &PSC human liver (native cell, intrahepatic cholangiocyte) &3322 &25.89\% \\ \Xhline{0.5pt}
10x-PSC-D1 &PSC human liver (PSC fibrotic region) &3174 &25.65\% \\
\Xhline{1pt}
\end{tabular}}
\caption{Overview of the experimental datasets.}
\label{table:dataset}
\end{table*}

\begin{table*}[t]
\centering
\renewcommand{\arraystretch}{1.25}
\fontsize{10pt}{14pt}\selectfont
\setlength{\tabcolsep}{5.1pt}
\resizebox{\textwidth}{!}{
\begin{tabular}{c"c"cc"ccc"ccccc}
\Xhline{1pt}
\multirow{3}{*}{\centering \makecell{Target \\ Dataset}}& \multirow{3}{*}{\centering Metric}& \multicolumn{9}{c}{\large Method}\\

\Xcline{3-12}{0.9pt}
& & \multicolumn{2}{c"}{Multimodal-based} & \multicolumn{3}{c"}{Image-based} & \multicolumn{5}{c}{ST-based} \\ 
\Xcline{3-12}{0.9pt}
& & MEATRD & M3DM  & SimpleNet & f-AnoGAN & PatchSVDD & DOMINANT & PREM & Spatial-ID & scmap & CAMLU \\ 
\Xhline{0.7pt}
\multirow{2}{*}{10x-PSC-A1} & AUC & $\mathbf{0.657}_{\pm0.073}$ & $0.475_{\pm0.006}$ & $0.483_{\pm0.129}$ & $\underline{0.647}_{\pm0.001}$ & - &  $0.590_{\pm0.043}$ & $0.567_{\pm0.008}$ & $0.486_{\pm0.006}$ & $0.500_{\pm0.000}$ & $0.537_{\pm0.074}$ \\

& F1 & $\mathbf{0.629}_{\pm0.085}$ & $0.235_{\pm0.007}$ & $0.284_{\pm0.114}$ & $\underline{0.479}_{\pm0.001}$ & - &  $0.356_{\pm0.046}$ & $0.291_{\pm0.010}$ & $0.449_{\pm0.009}$ & $0.415_{\pm0.000}$ & $0.217_{\pm0.118}$ \\
\Xhline{0.7pt}
\multirow{2}{*}{10x-PSC-B1} & AUC &  $\mathbf{0.675}_{\pm0.092}$ & $0.475_{\pm0.006}$ & $0.481_{\pm0.161}$ & $\underline{0.655}_{\pm0.001}$ & -  & $0.547_{\pm0.040}$ & $0.520_{\pm0.030}$ & $0.519_{\pm0.030}$ & $0.500_{\pm0.000}$ & $0.503_{\pm0.004}$ \\ 

& F1 &  $\mathbf{0.646}_{\pm0.101}$ & $0.235_{\pm0.007}$ & $0.274_{\pm0.146}$ & $0.482_{\pm0.001}$ & -  & $0.307_{\pm0.031}$ & $0.231_{\pm0.005}$ & $0.464_{\pm0.037}$ & $\underline{0.625}_{\pm0.000}$ & $0.017_{\pm0.018}$ \\ 
\Xhline{0.7pt}	
\multirow{2}{*}{10x-PSC-C1} & AUC & $\underline{0.664}_{\pm0.069}$ & $0.497_{\pm0.013}$ & $0.468_{\pm0.156}$ & $\mathbf{0.683}_{\pm0.001}$ & - & $0.595_{\pm0.060}$ & $0.508_{\pm0.008}$ & $0.517_{\pm0.015}$ & $0.500_{\pm0.000}$ & $0.501_{\pm0.002}$ \\ 

& F1 & $\mathbf{0.631}_{\pm0.078}$ & $0.259_{\pm0.012}$ & $0.268_{\pm0.131}$ & $\underline{0.530}_{\pm0.001}$ & -  & $0.344_{\pm0.057}$ & $0.245_{\pm0.006}$ & $0.470_{\pm0.012}$ & $0.411_{\pm0.000}$ & $0.008_{\pm0.006}$ \\
\Xhline{0.7pt}
\multirow{2}{*}{10x-PSC-D1} & AUC & $\mathbf{0.655}_{\pm0.071}$ & 
$0.498_{\pm0.012}$ &
$0.473_{\pm0.182}$ & $\underline{0.639}_{\pm0.001}$ & - & $0.534_{\pm0.091}$ & $0.519_{\pm0.001}$ & $0.575_{\pm0.018}$ & $0.500_{\pm0.000}$ & $0.508_{\pm0.011}$ \\

& F1 & $\mathbf{0.627}_{\pm0.082}$ & $0.266_{\pm0.013}$ & $0.287_{\pm0.172}$ & $0.464_{\pm0.001}$ & -  & $0.290_{\pm0.071}$ & $0.229_{\pm0.001}$ & $\underline{0.512}_{\pm0.014}$ & $0.408_{\pm0.000}$ & $0.042_{\pm0.036}$ \\
\Xhline{0.7pt}
\multirow{2}{*}{Mean} & AUC & $\mathbf{0.663}$ & 
$0.486$ &
$0.476$ & $\underline{0.656}$ & - & $0.567$ & $0.529$ & $0.524$ & $0.500$ & $0.512$ \\

& F1 & $\mathbf{0.633}$ & $0.249$ & $0.278$ & $\underline{0.489}$ & -  & $0.324$ & $0.249$ & $0.474$ & $0.465$ & $0.071$ \\
\Xhline{1pt}
\end{tabular}
}
\caption{Performance evaluation of anomalous tissue region detection across four primary sclerosing cholangitis (PSC) liver ST datasets. The table presents the results in terms of AUC and F1 scores, with each cell showing the average score from five independent runs and the corresponding standard deviation. The best score for each dataset is \textbf{bolded}, and the second-best score is \underline{underline}.}
\label{table:Main2}
\end{table*}

\section{Implementation}

\subsection{Dataset Descriptions}\label{dataset_descriptions}

As detailed in \Cref{table:dataset}, we conducted extensive experiments on two types of disease datasets to validate the generalizability of MEATRD: \\
\textbf{Breast Cancer}: ST datasets about Breast Cancer used in this study include eight reference 10x Visium datasets \cite{kumar2023spatially} derived from human healthy breast tissues, denoted as 10x-hNB-\{v03-v10\}\footnote{\url{https://cellxgene.cziscience.com/collections/4195ab4c-20bd-4cd3-8b3d-65601277e731}}, and nine target 10x Visium datasets \cite{andersson2021spatial} derived from human breast cancer tissues, denoted as 10x-hBC-\{A1-I1\}\footnote{\url{https://github.com/almaan/her2st}, and \url{https://zenodo.org/records/10437391}}.
\\ 
\textbf{Primary sclerosing cholangitis (PSC)}\footnote{\url{https://cellxgene.cziscience.com/collections/0c8a364b-97b5-4cc8-a593-23c38c6f0ac5}}
: Reference 10x Visium datasets denoted as 10x-hLiver-\{A1-D1\} contains 4 healthy human liver datasets and target 10x Visium datasets denoted as 10x-PSC-\{A1-D1\} are collected from 4 Primary sclerosing cholangitis slices. \\
The reference datasets are collectively used during training, and the target datasets are used during inference only. For each ST dataset, genes detected in fewer than 10 spots are excluded \cite{wolf2018scanpy}.
\subsection{Data Preprocessing}\label{preprocess}
For each ST dataset, genes detected in fewer than 10 spots are excluded. Then, raw gene expression counts are normalized with library size and log-transformed. 3000 highly variable genes (HVGs) are selected as inputs to the model using using the SCANPY package ~\cite{wolf2018scanpy}. 
 
\subsection{Implementation Details}\label{baseline details}
MEATRD is implemented using PyTorch \cite{paszke2019pytorch}. In Stage I, we adopt the default architecture of the Mobile-UNet with an output embedding dimension of 256. In Stage II, the gene encoder is a two-layer MLP, while the image encoder is the frozen Mobile-Unet encoder from Stage I. The MGDAT network includes three MGDAT blocks, each having a two-layer, four-headed transformer to generate 16-dimensional fused bottleneck embeddings in equation (5), and an attention layer with an input dimension of 272 and an output dimension of 256 in equation (8). The ResNet-based image decoder in this stage comprises eight residual blocks, while the gene decoder is a single-layer GNN with an output dimension of 3000. In Stage III, the image encoder is an eight-layer ResNet, while the gene encoder is consistent with that in Stage II. The FFN for multimodal data fusion is structured as a two-layer MLP with an output dimension of 256. In all stages, the training is conducted with the Adam optimizer, with a batch size to 128, and a learning rate of 1e-4. Stage I is trained for 30 epochs, Stage II for 10 epochs, and Stage III for 5 epochs. Finally, we have the three weight parameters $\alpha=0.5$ and $\beta=1$. 
For all baselines but M3DM and Spatial-ID, we adopted the recommended or default settings in the original study. M3DM, designed for natural images and point clouds, has an encoder unsuitable for ST data and histology image. Therefore, we replaced its encoders with MEATRD's encoders for fair comparison. Since Spatial-ID is pretrained using single-cell sequencing (scRNA-seq) data, we skipped its pretraining step and directly utilizes the pretrained model.

\section{Further Experiments}
\subsection{Supplement Results of Anomalous Tissue Region Detection}
In this section, we present the performance of MEATRD in testing for primary sclerosing cholangitis. MEATRD is trained on four human healthy liver ST datasets (i.e., 10x-hLiver-\{A1-D1\}) and tested on four human PSC (i.e., 10x-PSC-\{A1-D1\}) ST datasets. \Cref{table:Main2} highlights MEATRD's superiority over baseline models in detecting ATRs across datasets, consistently achieving the highest AUC scores and three times ranking first in F1 scores. The experiment yields similar results in terms of AP scores, as shown in Table 3 in supplementary material F.

\subsection{Ablation Studies}

\begin{table}[t]
\centering
\renewcommand{\arraystretch}{1.2}
\fontsize{7pt}{8pt}\selectfont

\begin{tabular}{c|c|c|c|c|c}
\Xhline{0.8pt}
Backbones   & Params & AUC & F1 & \makecell[c]{Training\\ time (min)} & \makecell[c]{Inference\\ time (s)} \\ \Xhline{0.5pt}
\textbf{MobileUNet}                 &\textbf{46.17M}		&\textbf{0.723}     &\textbf{0.741}    &\textbf{11.984}               &\textbf{0.354}                \\
             ResNet-18                     &53.71M           &0.717     &0.729    &12.927               &0.455                \\
             VGG-19                        &133.402M            &0.696     &0.703    &19.887              &0.489                \\ \Xhline{0.5pt}
MoCo                       &53.71M            &0.712     &0.726     &26.435               &0.449                \\
\Xhline{0.8pt}
\end{tabular}
\caption{Ablation study of backbones in MEATRD across eight human breast cancer datasets.}
\label{table:backbone}
\end{table}

\begin{table*}[t]
\centering

\renewcommand{\arraystretch}{1.25}
\fontsize{7pt}{8pt}\selectfont
\begin{tabular}{c|c|c|c|c|c|c|c}
\Xhline{1pt} 
\multicolumn{2}{c|}{Model}  & Params& MFlops & Complexity& Training time (s) & Inference time (s) & Memory Usage (GB) \\ \Xhline{0.7pt}
\multicolumn{1}{c|}{\multirow{3}{*}{MEATRD}}    &Stage I&0.73M&32.746&$\mathcal{O}(c_1|V|)$&100.02                & -   & 1.70    \\
    &Stage II&46.17M&405.909&$\mathcal{O}(c_2|V|+c_3|E|+c_4|D|^2)$&469.02                &0.15    & 4.65   \\
    &Stage III&5.22M&29.993&$\mathcal{O}(c_5|V|)$&150.00                &0.21   & 5.15    \\ \Xhline{0.7pt}
\multicolumn{2}{c|}{M3DM}       &            97.37M&   8,028.937& -              &468.00                &20.75   & 1.60    \\
\multicolumn{2}{c|}{SimpleNet}  &72.82M            &238.954& -               &0.72                &7.72   & 0.57\\
\multicolumn{2}{c|}{f-AnoGAN}   &1.30M            &118.496& -               &1322.80                &3.82      & 0.14 \\
\multicolumn{2}{c|}{PatchSVDD}  &0.17M&1.083& -               &5761.08                &584.33     & 0.16  \\
\multicolumn{2}{c|}{PREM}       &0.38M&0.768& -               &326.58                &0.01    & 0.02   \\
\multicolumn{2}{c|}{DOMINANT}   &0.40M&0.396& -               &0.83                &0.01     & 11.34  \\
\multicolumn{2}{c|}{Spatial-ID} &4.36M&20.555& -               &501.07&0.64 & 0.21\\
\multicolumn{2}{c|}{Scmap}      & -           & -  & -               &2.02                &0.16      & - \\
\multicolumn{2}{c|}{CAMLU}      &1.62M&1.619& -               &125.57                &0.64      & - \\ \Xhline{1pt}
\end{tabular}
\caption{The overall training time on eight 10x-hNB datasets, including a total of 20,183 spots. Each spot is associated with a 3000-dimensional gene expression vector and a histology image patch of size 32x32.}
\label{complexity}
\end{table*}

\textbf{Using multiple data modality.}
In this evaluation, MEATRD is adapted to use either histology image or ST data alone, by skipping the step of the multimodal data fusion process and omitting the branch for learning the alternative data modality. Using only histology images results in a substantial reduction of MEATRD's average AUC scores by 31.26\% and F1 scores by 26.59\%, while using ST data alone on average lowers its AUC scores by 12.72\% and F1 scores by 9.99\%. These findings corroborate the synergic effects of histology image and ST data in enhancing ATR detection.\\
\textbf{Multimodal data fusion using fused bottleneck embedding.}
To assess the impact of multimodal data fusion on ATR detection, we substitute the cross-modal bottleneck embedding-guided fusion with a direct concatenation of image and gene embeddings. The comparison reveals that using bottleneck embedding for multimodal data fusion contributes to an average increase of 13.15\% in AUC scores and 7.55\% in F1 scores over the simple concatenation method. This improvement underscores our approach's efficacy in enhancing multimodal embeddings by collating and condensing the most relevant information from each data modality. \\
\textbf{Masking for target node reconstruction.}
Introducing target-node-masking, which strategically omits self-information during the target node reconstruction, theoretically mitigates the model's over-generalization issue. This masking prevents the model from "learning too well" to replicate its input, leading to minimal reconstruction errors even for anomalies. To validate the effectiveness of this technique, we compare the latent multimodal reconstruction errors, defined in equation (13), with and without target-node-masking during inter-node message passing. The violin plots in \Cref{suppfig1} showcase that this masking not only increases reconstruction errors for both inliers and anomalies but also amplifies the discrepancy between their reconstruction errors. This enhanced discrepancy in turn aids MEATRD's discriminative model in Stage III to separate anomalies from inliers, contributing to an average increase of 10.38\% in AUC scores and 6.01\% in F1 scores when compared to the omission of target-node-masking.  
\begin{figure}[h]
  \centering
\includegraphics[width=0.3\textwidth]{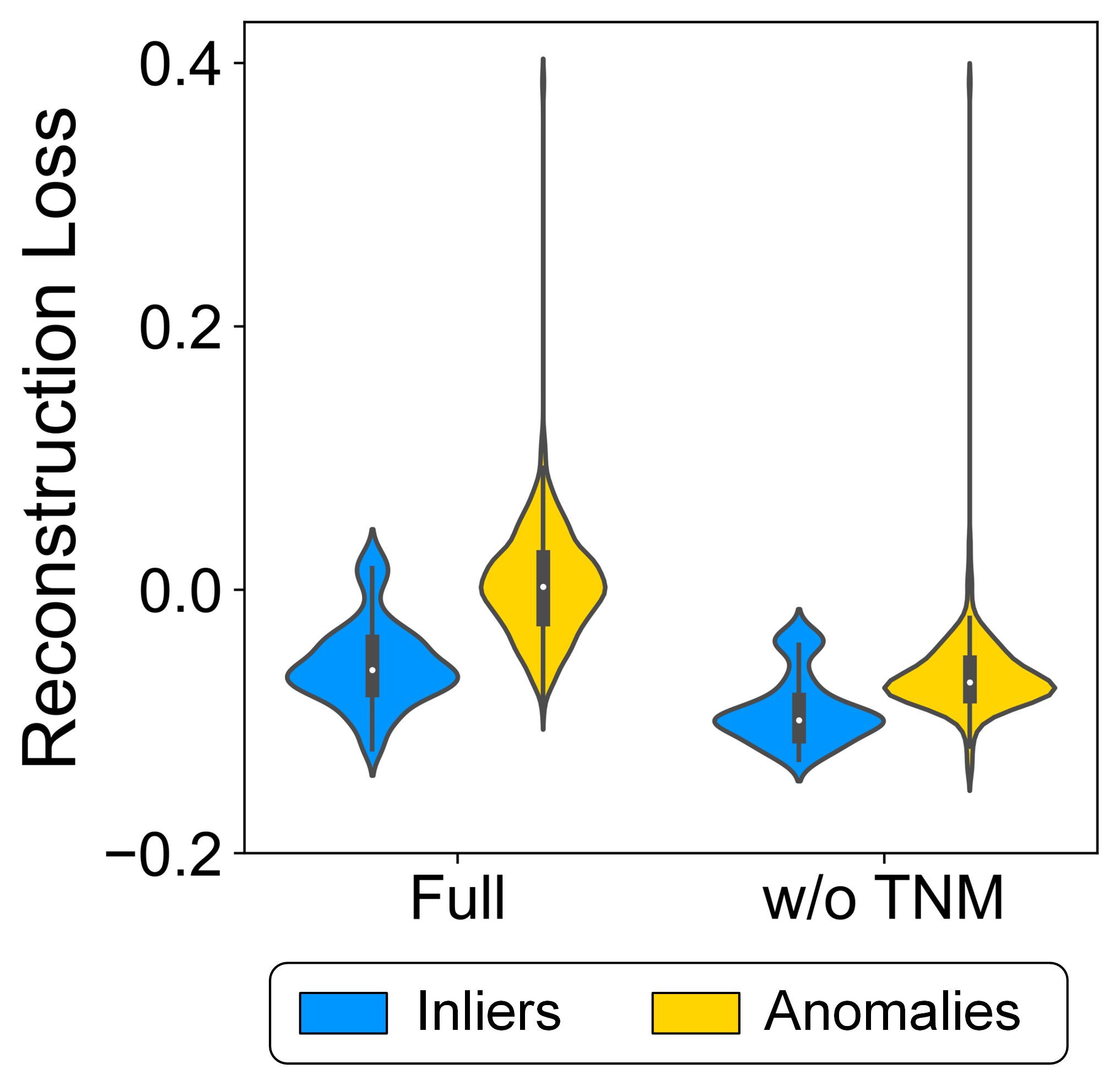}
  \caption{Violin plots illustrating the distributions of latent multimodal reconstruction errors for inliers (blue) and anomalies (yellow) with (``Full'') and without (``w/o TNM'') the implementation of target node masking (TNM).}
  \label{suppfig1}
\end{figure}
\\
\textbf{Multimodal reconstruction losses in one-class classifier.}
Most one-class classification methods directly utilize inlier embeddings in the reference to determine normal data distribution and thus rely heavily on the quality of instance embeddings, which, however, is sensitive to batch effects across datasets \cite{ouardini2019towards}. Here, we investigate whether using latent multimodal reconstruction losses, which avoid cross-batch comparisons, as an alternative input to METARD's one-class classifier can improve the accuracy of ATR detection. For this purpose, instance embeddings generated by the MGDAT network instead of the latent reconstruction losses are input to the one-class classifier in Stage III. We find that this modification reduces MEATRD's performance, with an average decline of 11.20\% in AUC scores and 7.56\% in F1 scores, demonstrating the necessity of using latent reconstruction losses in this context.
\\
\textbf{One-class classifier.}
Finally, to assess the effect of the one-class classification in ATR detection, we remove the entire stage III and use the weighted sum of image and gene reconstruction errors, defined in equation (10) in the main text, as anomaly scores. The direct use of reconstruction errors leads to MEATRD's suboptimal performance, as indicated by its average decrease of 19.23\% in AUC scores and 14.84\% in F1 scores. This finding suggests that, by collapsing multimodal reconstruction losses of inliers into a compact hypersphere in the latent space, the separation of inliers and anomalies is boosted, addressing the model over-generalization.
\\
\textbf{Mobine-Unet as pretrained visual feature extractor.}
Here, we replace Mobile-Unet with three pretrained visual feature extractor widely used for natural images, including VGG-19 ~\cite{simonyan2014very} and ResNet-18 ~\cite{he2016deep}, and MoCo ~\cite{he2020momentum}, to extract visual features from histology images of the eight human breast cancer datasets in Stage I. 
As shown in \Cref{table:backbone}, MEATRD's performance declines with these networks, as indicated by the lower AUC and F1 scores. This is probably due to that tissue images contain complex patterns and features specific to biological structures, which may not be effectively captured by networks optimized for natural image recognition. Particularly, data augmentation techniques, e.g., blurring and resizing, used by contrastive learning approach can generate "positive" images with semantics that significantly deviate from the original image.

\subsection{Sensitivity Analysis}

Table 2 shows the average model performance over five independent runs across the eight 10x-hBC datasets. We observe that a heavily weighing on image data (e.g., $\alpha=0.9$ or $\beta=0.9$ ) compromises model performance due to inadequate utilization of gene information for visually indistinguishable ATR in histology image. On the other hand, overly weighting ST data ($\alpha=0.1$ or $\beta=0.1$) also reduces performance, though it outperforms overweighing image data, likely due to the higher signal-to-noise ratio in ST data. Additionally, optimal model performance is achieved with a small bottleneck embedding dimension (e.g., 16), aligning with our theoretical analysis that nuisance information is minimized by in condensed bottleneck embedding. In contrast, a larger dimension for the one-class classification classifier (e.g., 256) is beneficial, providing more flexibility for collapsing inlier embeddings into a hypersphere. The best results are obtained with three MGDAT layers, balancing message passing within the graph and data over-smoothing. Lastly, variations in the number of attention heads in MGDAT and the visual feature dimension have relatively minor impact on model performance. 

\subsection{Complexity Analysis}
In this section, we first theoretically analyze MEATRD's model complexity.  As shown in \Cref{complexity}, Stage I is built on a 36-layer CNN network comprising lightweight inverted residual blocks, with 0.73M parameters and a time complexity of $\mathcal{O}(c_1|V|)$~\cite{he2015convolutional}, where $|V|$ denotes the number of instances. In Stage II, the main complexity arises from the MGDAT blocks, which compute feature-level and node-level attentions with complexities of $\mathcal{O}(c_2|V|)$ and $\mathcal{O}(c_3|E|)$, respectively ~\cite{velivckovic2018graph}. Here, $|E|$ denotes the number of graph edges and $|E|\ll |V|^2$. This stage has 46.17M parameters. Stage III mainly involves the lightweight ResNet, which has a time complexity of $\mathcal{O}(c_5|V|)$ ~\cite{he2015convolutional}, and it has 5.22M parameters.  

Empirical training on 20,183 image patches shows that Stage I has 32.746 MFlops and a training time of 100.02s, Stage II has 405.909 MFlops and a training time of 469.02s, stage III has 29.993 MFlops and a training time of 150s. Inference time is negligible compared to training time. MEATRD's total time cost is comparable to well-received methods M3DM and Spatial-ID.

\subsection{Robustness to Noisy Data}
The quality of ST data is indeed crucial for the model's performance, and $\beta$ can be used to balance the influence of different data sources accordingly. Specifically, when the quality of ST data is lower relative to image data, we set a lower $\beta$ to increase the model's reliance on image data, and vice versa. In our experiments with human breast tissue datasets (10x-hNB), where image and ST data have comparable quality, we set $\beta$ to 0.5. Typically, ST data quality is assessed using the average location-wise zero proportion $z$ (Zhu et al., 2023), representing the average proportion of zero-read-count genes, with lower values indicating higher signal-to-noise ratios. To explore a more systematic setting of, we altered $z$ of the 10x-hNB by randomly masking gene read counts and tested various $\beta$ values, as shown below:

\begin{table}[h]
\centering
\resizebox{\linewidth}{!}{
\begin{tabular}{ccccccc}
\toprule
$\bar{z}$ & 0.1 & 0.3 & 0.5 & 0.7 & 0.9 \\
\midrule
0.925 & 0.699 & 0.713 & \textbf{0.723} & 0.707 & 0.654 \\
0.950 & 0.655 & 0.696 & \textbf{0.712} & 0.701 & 0.657 \\
0.975 & 0.644 & 0.652 & 0.658 & \textbf{0.697} & 0.655 \\
\bottomrule
\end{tabular}
}
\end{table}

We find that, when $\bar{z}\leq 0.95$, default setting $\beta=0.5$ consistently yields the best results; only in the extreme case $\bar{z}= 0.975$ indicating the quality of gene data is too poor, it's necessary to set $\beta=0.7$. Considering this relationship, we also provide an adaptive strategy. We set the heuristic function for $\beta$ varying with $\bar{z}$ as:
\begin{equation*}
    \beta = \begin{cases}
        0.5, & \bar{z} \leq 0.95 \\
        0.5 + 0.5\textrm{sigmoid}\left(200(\bar{z}-0.975)\right), & \bar{z} > 0.95
    \end{cases}
\end{equation*}
 
\end{document}